%% file: main.tex
\documentclass[11pt]{article}
\usepackage[margin=1in]{geometry}
\usepackage{times}
\usepackage[authoryear,round]{natbib}
\usepackage{authblk}
\usepackage{hyperref}
\usepackage{url}
\usepackage{multirow}
\usepackage{graphicx}
\usepackage{algorithm}
\usepackage{algorithmic}
\usepackage{amsmath}
\usepackage{amsthm}
\usepackage{amssymb}
\usepackage{bm}
\usepackage{xcolor}
\usepackage{enumitem}
\input{math_commands.tex}

\newtheorem{definition}{Definition}
\newtheorem{theorem}{Theorem}
\newtheorem{corollary}{Corollary}

\allowdisplaybreaks

\title{Protecting Sensitive Data in Image Synthesis via PAC-Private Adaptation for Diffusion Models}
\author[1]{Boming Miao}
\author[1]{Tao Zhang}
\author[1]{Netanel Raviv}
\author[2]{Murat Kantarcioglu}
\author[3]{Bradley A. Malin}
\author[1]{Yevgeniy Vorobeychik}
\affil[1]{Department of Computer Science and Engineering, Washington University in St. Louis\\\texttt{\{bmiao,taoz,netanel.raviv,yvorobeychik\}@wustl.edu}}
\affil[2]{Department of Computer Science, Virginia Tech\\\texttt{muratk@vt.edu}}
\affil[3]{Department of Biostatistics, Vanderbilt University\\\texttt{b.malin@vumc.org}}
\date{}

\begin{document}

\maketitle
\begin{abstract}
Synthetic data are increasingly used as an alternative to sharing sensitive records. However, synthetic data generation does not guarantee privacy, as diffusion models trained or adapted on sensitive data remain susceptible to reconstruction attacks. Moreover, while approaches that use differential privacy (DP), such as DP-SGD, achieve provably private diffusion model training, the repeated gradient clipping and noise injection they require result in significant utility loss. An important limitation of DP-based privacy is that, although it has a provable relationship to reconstruction privacy (RP), that relationship is indirect. RP is defined in terms of limiting how much an adversary’s posterior distribution over sensitive data  differs from the prior, whereas DP provides guarantees by bounding the sensitivity of outputs to changes in individual records. This indirection is an important source of the utility loss. To address this, we propose a PAC-private diffusion model adaptation to achieve reconstruction privacy. Since PAC-privacy is defined directly with respect to posterior advantage over the prior, it directly implicates RP. To obtain scalable PAC privatization in high dimensions, we first learn a compact data-dependent diffusion model component using LoRA or Textual Inversion, and then calibrate anisotropic Gaussian noise from the covariance of repeated mechanism outputs. Unlike DP-SGD, our method perturbs the learned component only once after optimization, thereby avoiding privacy composition across gradient updates. We evaluate the framework on few-shot concept personalization and full-dataset image synthesis, and show that the proposed approach better preserves subject identity, generation quality, and downstream classification accuracy than DP while achieving the same reconstruction privacy. 
\end{abstract}

\input{sections/1.intro}

\input{sections/2.relwork}

\input{sections/3.prelim}

\input{sections/4.approach}

\input{sections/5.experiment}
\input{sections/6.conclusion}

\bibliography{references}
\bibliographystyle{plainnat}

\appendix
\input{sections/appendixa}

\input{sections/appendixb}

\input{sections/appendixc}
\input{sections/appendixd}

\end{document}

%% file: math_commands.tex
\usepackage{amsmath,amsfonts,bm}

\def\eqref#1{equation~\ref{#1}}
\def\1{\bm{1}}

\DeclareMathAlphabet{\mathsfit}{\encodingdefault}{\sfdefault}{m}{sl}
\SetMathAlphabet{\mathsfit}{bold}{\encodingdefault}{\sfdefault}{bx}{n}

%% file: sections/1.intro.tex
\section{Introduction}

Synthetic data provide an appealing alternative to the direct use of raw sensitive data in downstream applications, such as clinical data analytics, allowing useful data to be generated and shared without directly exposing the original records~\citep{hu2024sok}. For image synthesis, diffusion models have emerged as a leading generative paradigm for synthetic data, demonstrating strong performance and supporting a wide range of modern image-generation applications~\citep{nichol2021glide, rombach2022high, podell2024sdxl}.

However, 
diffusion-based data synthesis does not necessarily prevent the reconstruction of sensitive training data~\citep{carlini2023extracting,stadler2022synthetic,webster2023reproducible}.
Indeed, diffusion models may even memorize individual training examples~\citep{matsumoto2023membership}.
Accordingly, for synthetic data to serve as a privacy-preserving alternative to the direct use of raw data, the synthesis process itself must provide an explicit privacy guarantee with respect to the sensitive training data. 
A widely used framework for providing such formal privacy guarantees for diffusion-based synthesis is differential privacy (DP)~\citep{dwork2006our}, with DP-SGD~\citep{abadi2016deep,li2024privimage,liu2024efficient,liu2024differentially} commonly employed to train or fine-tune diffusion models on sensitive data. 
DP-SGD bounds per-example gradient contributions through clipping and adds Gaussian noise during private optimization. 
However, differentially private training also introduces significant utility degradation.
Several approaches aim to reduce this degradation, for example, DP-Promise~\citep{wang2024dp}, which exploits the noise inherent in the diffusion forward process, and DP-FETA~\citep{li2025easy}, which adds a private preprocessing step to warm start diffusion model training. Given the widespread use of latent diffusion models for scalable, high-resolution image synthesis~\citep{rombach2022high}, we focus on privacy-preserving synthesis in this practically important setting.
However, these approaches work by exploiting the properties of images, and cannot be directly extended to latent diffusion, which requires additional sensitivity control because the VAE latent representations need not have a bounded input domain.

Stepping back, there are two principal sources of utility loss in DP-based diffusion model training.
First, privacy must be enforced at every optimization step, using up privacy budget.
Second, DP is a worst-case and somewhat abstract notion of privacy which considers output sensitivity to any single-record changes.
However, the main motivation for generating synthetic data is to avoid revealing real sensitive data, which naturally translates to the goal of preventing successful reconstruction of original data or sensitive attributes within it, or \emph{reconstruction privacy (RP)}.
RP, in turn, can be formally captured as ensuring that the attacker's posterior distribution over the training data conditional on observing the diffusion model, or the synthetic data generated by it, does not significantly differ from the prior data distribution---as measured, for example, using KL-divergence.
While bounds on this difference can be derived from DP guarantees (as we show), the indirectness which results (since no bounds are entirely tight, and worst-case adversarial assumptions of DP are stronger than what we need for RP) entails significant utility loss.

Together, these considerations motivate a different route to privacy-preserving latent diffusion synthesis.
First, rather than enforcing privacy at every optimization step, 
we calibrate it directly to the learned adaptation that is ultimately used for privacy-preserving synthetic image generation.
Because the synthetic images are generated from the final learned adaptation, the data-processing inequality allows us to enforce this objective by privatizing the adaptation itself. Specifically, instead of clipping and perturbing gradients throughout optimization, we apply a single calibrated perturbation to the learned adaptation parameters after training, thereby avoiding explicit privacy composition across gradient updates. 
Second, since applying standard DP through output perturbation requires calibration to the worst-case sensitivity of the learned parameters, it tends to be overly conservative and result in significant utility loss.
To address this, we instead use an information-theoretic privacy notion, PAC privacy~\citep{xiao2023pac}, as the building block.
PAC privacy is defined directly in terms of the adversarial posterior reconstruction advantage over the prior, thereby providing a natural formalization of RP.
Additionally, we enable a principled comparison with DP by showing that $(\varepsilon,\delta)$-DP bounds the reconstruction advantage.

The key remaining technical challenge in applying the privatization procedure to achieve PAC privacy is that it scales poorly with dimension as a result of a Monte Carlo calibration step it requires~\citep{xiao2023pac,sridhar2025pac,zhang2026residual}.
To make it practical in the high-dimensional image synthesis setting, 
we employ parameter-efficient diffusion adaptation. For tasks that require parameter fine-tuning, we learn compact LoRA adapters instead of optimizing the full parameter space. Because functionally equivalent LoRA factors can exhibit different scalings, we canonicalize the factors before covariance estimation to remove this ambiguity and improve calibration stability. For simpler adaptation tasks that can be represented through text conditioning, we instead privatize the special-token embedding learned by Textual Inversion~\citep{gal2022image}. In both cases, PAC calibration is performed on a substantially lower-dimensional learned component rather than on the full diffusion model.

We evaluate the proposed framework in two complementary settings. For few-shot personalized generation, we compare PAC output perturbation with DP-SGD using both Textual Inversion and LoRA, and evaluate concept or identity fidelity, text alignment, and perceptual image quality. For general-purpose synthetic data generation, we fine-tune diffusion models on private datasets using LoRA and evaluate the downstream classification accuracy of models trained on generated data. Across these settings, our results show that the proposed PAC-based final-output privatization typically preserves substantially greater utility than DP-SGD under matched privacy budgets.

In summary, we make the following contributions:

\begin{itemize}[topsep=0pt,itemsep=0pt,leftmargin=*]
    \item We develop end-to-end privacy-preserving image synthesis based on PAC-private diffusion-model adaptation that avoids per-example perturbation over individual gradient updates.
    \item  We develop a scalable PAC calibration framework for parameter-efficient diffusion adaptation using LoRA and Textual Inversion. For LoRA, we canonicalize the learned factors before calibration to remove scale and sign ambiguities, yielding more stable covariance estimation and improved downstream utility.
    \item Experimental results demonstrate that our PAC-private diffusion-adaptation framework preserves substantially greater utility than DP-SGD in challenging high-resolution and small-data regimes.
\end{itemize}

%% file: sections/2.relwork.tex
\section{Related Work}

\noindent\textbf{Differentially Private Diffusion Models.}
Private diffusion training is commonly implemented using DP-SGD. DPDM~\citep{dockhorn2022differentially} introduces noise multiplicity, which averages gradients over multiple diffusion noise realizations of each training example without additional privacy cost. \citet{ghalebikesabi2023differentially} improve private diffusion fine-tuning through public pretraining, augmentation multiplicity, and biased timestep sampling. Other approaches exploit the structure of diffusion training more directly, e.g., DP-Promise~\citep{wang2024dp}, which incorporates the forward diffusion process into its privacy analysis, and DP-FETA~\citep{li2025easy}, which adopts an easy-to-hard curriculum that first learns coarse structure from privately aggregated central images and subsequently performs DP-SGD training. As image resolution and model scale increase, private diffusion training becomes expensive. Another line of work therefore improves efficiency by leveraging pretrained diffusion models and restricting private optimization to a small subset of parameters. \citet{jiang2025rapid} use public data to construct a library of denoising trajectories, retrieve a similar public trajectory for each private training example, and apply DP training only to the remaining refinement stage. DP-LDM~\citep{liu2024differentially} performs private training in latent space and fine-tunes only the attention modules. \citet{liu2024efficient} propose a low-dimensional adaptation strategy for computationally efficient private fine-tuning, and DP-LoRA~\citep{tsai2025differentially} applies DP-SGD to LoRA modules in pretrained latent diffusion models and demonstrates that parameter-efficient private fine-tuning can achieve a better privacy-utility trade-off than full-parameter fine-tuning.

\noindent\textbf{PAC Privacy and Its Applications.}
Our approach uses tools from PAC Privacy~\citep{xiao2023pac}, which provides a framework that limits the improvement in an adversary's ability to recover private data after observing the output of a mechanism. It also enables automatic privacy analysis of arbitrary black-box data-processing mechanisms. \citet{sridhar2025pac} further develop a practical framework for automatically privatizing such mechanisms through output canonicalization and calibrated perturbation. Subsequent work extends PAC Privacy to database systems~\citep{sridhar2026pac,battiston2026simd}, while \citet{zhu2026pac} establishes linear mutual information composition for adaptively and adversarially selected PAC-private model responses. \citet{zhang2026residual} show that the Gaussian log-determinant bound used by existing PAC mechanisms is in general conservative for non-Gaussian outputs and introduce Residual-PAC Privacy with an automatic privatization algorithm termed Stackelberg PAC privatization to tightly implement the target privacy budget with less noise.
PAC Privacy guarantees have also been studied for learnable data encodings~\citep{xiao2024formal}, constrained one-sided perturbations~\citep{xiao2025one}, verifiable privatization using zero-knowledge proofs~\citep{repetto2026pac}, and autoregressive generation~\citep{mirzadehsarcheshmeh2026pac}. More recently, \citet{ertan2026paczero} apply PAC Privacy to sign-quantized language-model fine-tuning. In diffusion-based generation, P3DM~\cite{xu2024pac} studies PAC privacy through data perturbation and calibrates Gaussian noise based on the covariance of image space. However, directly perturbing data in pixel space can substantially degrade the image quality, thereby reducing the utility of the resulting model for downstream tasks. In contrast, we perturb only the compact, privacy-dependent adaptation parameters while leaving the training images unchanged.

%% file: sections/3.prelim.tex
\section{Preliminaries}
\label{S:prelim}

A major privacy concern in diffusion model adaptation for generating synthetic data is that someone (an ``adversary'') with access to the adapted diffusion model or the data it generates should not be able to reconstruct the original sensitive training data \emph{beyond what an adversary armed solely with the prior could}.
For example, if the training data includes sensitive faces as well as common non-sensitive background elements, such as trees and cars, we are only concerned that the faces should not be effectively reconstructed.
In this section, we formalize this notion of \emph{reconstruction privacy (RP)} in information-theoretic terms; it then serves as our underlying privacy goal for diffusion model adaptation.
Subsequently, we overview the DP tools for achieving RP, as well as diffusion models.

\noindent\textbf{Reconstruction Privacy.}
We begin by formalizing reconstruction privacy.
The key idea RP captures is that an adversary with access to a model trained on sensitive data should learn little more about these data than if they only knew the prior data distribution.
Let $\mathcal X$ denote the sample space and $\mathcal{D}$ the distribution over $\mathcal X$.
Define $X=(x_1,\ldots,x_n)$ as the private dataset with each $x_i \in \mathcal X$ drawn i.i.d.~from $\mathcal{D}$.
We capture reconstruction attack success in terms of the attacker successfully identifying an estimate $\hat{x}$ which is close to some data point in $X$.
To formalize, let $d(\hat{x},x)$ be the distance (e.g., $\ell_p$ norm) between a pair $\widehat{x},x \in \mathcal X$ and $r$ a reconstruction threshold, such that the estimate $\hat{x}$ is successful if $\min_{x \in X} d(\widehat{x},x) \le r$.

Let $M(X)$ be a function that maps a dataset $X$ to some output space $\mathcal O$.
In our setting, $M$ is the algorithm for diffusion model adaptation outputting an adapter. The privacy goal is that an attacker who observes the output of $M$ should not thereby obtain much more \emph{information} about $X$ than through knowing $\mathcal{D}$ alone. More precisely, let $\delta(M)$ be the attacker-optimal posterior reconstruction error defined as $\delta(M)=1-\sup_{\mathcal A}\Pr_{X\sim\mathcal D^n}[\min_{x \in X} d(\mathcal A(M(X)),x)\le r]$, where $\mathcal{A}$ is a reconstruction function, whereas the optimal prior reconstruction error is $\delta^{o}=1-\sup_{\widehat{x}\in\mathcal X}\Pr_{X\sim\mathcal D^n}[\min_{x \in X}d(\widehat{x},x)\le r]$.
Treating prior and posterior reconstruction errors in distributional terms, we wish that the difference between these two distributions is small, naturally formalized in terms of KL-divergence as in PAC privacy~\citep{xiao2023pac} of which our notion of $\beta$-reconstruction privacy is a special case. 
\begin{definition}[$\beta$-reconstruction privacy]
Let $\Delta_{\mathrm{KL}}(\delta^o,\delta(M))\equiv D_{\mathrm{KL}}(\operatorname{Bern}(1-\delta(M))\|\operatorname{Bern}(1-\delta^o))$ be the \emph{reconstruction advantage} of $M$, where $D_{\mathrm{KL}}$ is the KL-divergence and $\operatorname{Bern}(\cdot)$ the Bernoulli distribution.
Then $M$ satisfies $\beta$-reconstruction privacy ($\beta$-RP) if $\Delta_{\mathrm{KL}}(\delta^o,\delta(M)) \le \beta$.
\end{definition}
Moreover, $\Delta_{\mathrm{KL}}(\delta^o,\delta(M))\leq I(X;M(X))$~\citep{xiao2023pac}, where $I(X;M(X))$ is the mutual information between $X$ and $M(X)$~\citep{shannon1948mathematical}.
Thus, it suffices to bound $I(X;M(X))$
to achieve $\beta$-RP.

\noindent\textbf{Differential Privacy.}
Unlike RP, differential privacy (DP) enforces indistinguishability between the output distributions induced by neighboring datasets that differ in one sample.
\begin{definition}[$(\varepsilon,\delta)$-differential privacy~\citep{dwork2006our}]
A randomized mechanism $M$ satisfies $(\varepsilon,\delta)$-differential privacy if for any neighboring datasets $X\simeq X'$ differing in the addition or removal of one sample, and any measurable set $S\subseteq\mathcal O$, $\Pr[M(X)\in S]\leq e^\varepsilon\Pr[M(X')\in S]+\delta$.
\end{definition}

For private neural network training, DP-SGD~\citep{abadi2016deep} is the most widely adopted approach that applies the Gaussian mechanism to clipped stochastic gradients.
In learning step $t$, let $g_t^{(i)}=\nabla_{\theta_t}\ell(\theta_t;x_i)$ denote the gradient of sample $x_i$. Each gradient is clipped as $\operatorname{clip}_C(g_t^{(i)})=g_t^{(i)}\min\{1,C/\|g_t^{(i)}\|_2\}$, and for a minibatch $\mathcal B_t$, DP-SGD computes $\widetilde g_t=|\mathcal B_t|^{-1}\{\sum_{i\in\mathcal B_t}\operatorname{clip}_C(g_t^{(i)})+Z_t\}$, where $Z_t\sim\mathcal N(0,\sigma_{\mathrm{DP}}^2C^2I)$ and $\sigma_{\mathrm{DP}}$ is obtained from privacy accounting, and updates the model using $\widetilde g_t$.

\noindent\textbf{Diffusion Models.}
Diffusion models learn to generate data by reversing a progressive Gaussian noising process~\citep{ho2020denoising, song2020denoising}, often implemented in a lower-dimensional latent space to improve efficiency~\citep{rombach2022high}. 
Specifically, given an image $x$, a pretrained encoder $\mathcal E$ produces its latent representation $z_0=\mathcal E(x)$. For a variance schedule $\{\beta_t\}_{t=1}^{T}$, let $\alpha_t=1-\beta_t$ and $\bar{\alpha}_t=\prod_{s=1}^{t}\alpha_s$. A noisy latent at timestep $t$ is sampled as $z_t=\sqrt{\bar{\alpha}_t}z_0+\sqrt{1-\bar{\alpha}_t}\epsilon$, where $\epsilon\sim\mathcal N(0,I)$. For text-conditioned generation, a text encoder maps a prompt $y$ to a conditioning representation $c(y)$. The denoising network $\epsilon_\theta$ is trained to predict the injected noise by minimizing $\mathcal L(\theta)=\mathbb E_{x\sim\mathcal D,\,t\sim\mathrm{Unif}([T]),\,\epsilon\sim\mathcal N(0,I)}[\|\epsilon-\epsilon_\theta(z_t,t,c(y))\|_2^2]$.
During generation, the reverse process starts from $z_T\sim\mathcal N(0,I)$. 
We use classifier-free guidance~\citep{ho2022classifier}, which combines conditional and unconditional predictions as $\widehat{\epsilon}_\theta=\epsilon_\theta(z_t,t,c(\varnothing))+w[\epsilon_\theta(z_t,t,c(y))-\epsilon_\theta(z_t,t,c(\varnothing))]$, where $w$ is the guidance scale. A diffusion sampler iteratively estimates the noisy latent at the previous timestep, and the final latent is decoded as $\tilde{x}=\mathcal G(z_0)$, where $\mathcal G$ is the decoder jointly trained with $\mathcal{E}$.

%% file: sections/4.approach.tex
\section{Reconstruction-Private Diffusion Adaptation}
\label{sec:pac_ti}

\subsection{Overall Framework}

Training a diffusion model from scratch on private data requires many optimization steps to learn basic denoising behavior and general visual structure, a process difficult to make private without considerable sacrifice in utility. 
Instead, we initialize the model from publicly pretrained parameters $\theta_0$ and use the private dataset only to learn a compact adaptation component $\phi$. 
Our approach for training $\phi$ proceeds in two phases: the noise calibration phase, in which we estimate the covariance matrix of an adaptation algorithm for $\phi$, and the privatization phase, in which the trained adaptation is privatized by adding Gaussian noise to obtain a provable $\beta$-RP guarantee.

\noindent\textbf{Noise calibration.}
Our first step involves estimating the covariance of an adapter $\phi$ trained on bootstrapped subsamples of the dataset $X$.
Specifically, given the public initialization $\theta_0$, let $\mathrm{Train}$ denote the standard gradient-based training procedure for $\phi$, and let $\mathcal Q$ denote the bootstrap distribution induced by the private training set. 
We repeatedly sample $X^{(k)}\sim\mathcal Q(X_{\mathrm{train}})$ and run the training algorithm 
to obtain $\phi^{(k)}=\mathrm{Train}(\theta_0,X^{(k)})$. Let $\bar\phi=\sum_{k=1}^{K}\phi^{(k)}/K$. We estimate the covariance of the trained adapter $\phi$ (viewed as a random vector) as $\widehat\Sigma=\sum_{k=1}^{K}(\phi^{(k)}-\bar\phi)(\phi^{(k)}-\bar\phi)^\top/(K-1)$. We progressively increase $K$ until the relative change in $\operatorname{tr}(\widehat\Sigma)$ falls below the convergence threshold $\tau$, and then perform the eigendecomposition $\widehat\Sigma=V\operatorname{diag}(\lambda_1,\ldots,\lambda_d)V^\top$. 
The noise variance along direction $V_i$ is set to $e_i=\sqrt{\lambda_i}\sum_{j=1}^{d}\sqrt{\lambda_j}/(2\beta)$, giving $\widehat\Sigma_B=V\operatorname{diag}(e_1,\ldots,e_d)V^\top$. Algorithm~\ref{alg:diffusion_pac_calibration} summarizes the complete procedure.

\begin{algorithm}[ht]
\caption{Noise Calibration for Diffusion Adapters}
\label{alg:diffusion_pac_calibration}
\begin{algorithmic}[1]
\REQUIRE Private dataset $X_{\mathrm{train}}$; frozen public model $\theta_0$ and pretrained encoder $\mathcal E$; trainable component $\phi\in\mathbb R^d$ with fixed initialization $\phi_{\mathrm{init}}$; adaptation steps $S$; learning rate $\eta$; bootstrap distribution $\mathcal Q(X_{\mathrm{train}})$ that samples $|X_{\mathrm{train}}|$ records with replacement; mutual-information budget $\beta$; tolerance $\tau$
\ENSURE Noise covariance $\widehat\Sigma_B$

\STATE $\Phi\gets\varnothing$, $K\gets0$, $c\gets\infty$, $\nu_{\mathrm{old}}\gets\bot$

\WHILE{$c\geq\tau\ \lor\ K\leq2$}
    \STATE $K\gets K+1$
    \STATE $X^{(K)}\sim\mathcal Q(X_{\mathrm{train}})$; $\phi_0\gets\phi_{\mathrm{init}}$

    \FOR{$s=0,\ldots,S-1$}
        \STATE Sample a minibatch
        $\mathcal B_s=\{(x_{s,j},y_{s,j})\}_{j=1}^{B}$ from $X^{(K)}$
        \STATE Independently sample $t_{s,j}\sim\operatorname{Unif}([T])$ and
        $\epsilon_{s,j}\sim\mathcal N(0,I)$ for each $j=1,\ldots,B$
        \STATE $z_{s,j}^{0}\gets\mathcal E(x_{s,j})$;
        $z_{s,j}^{t}\gets\sqrt{\bar\alpha_{t_{s,j}}}z_{s,j}^{0}
        +\sqrt{1-\bar\alpha_{t_{s,j}}}\epsilon_{s,j}$
        \STATE $\mathcal L_s\gets \sum_{j=1}^{B}
        \lVert\epsilon_{s,j}
        -\epsilon_{\theta_0,\phi_s}
        (z_{s,j}^{t},t_{s,j},c(y_{s,j}))\rVert_2^2/B$
        \STATE $\phi_{s+1}\gets\phi_s-\eta\nabla_{\phi_s}\mathcal L_s$
        \hfill\textit{\# Update only the adaptation component}
    \ENDFOR

    \STATE $\phi^{(K)}\gets\operatorname{vec}(\phi_S)$; $\Phi\gets\Phi\cup\{\phi^{(K)}\}$

    \IF{$K\geq2$}
        \STATE $\bar\phi\gets \sum_{k=1}^{K}\phi^{(k)}/K$;
        $\widehat{\Sigma}\gets\sum_{k=1}^{K}
        (\phi^{(k)}-\bar\phi)(\phi^{(k)}-\bar\phi)^\top/(K-1)$; $\nu\gets\operatorname{tr}(\widehat\Sigma)$
        \STATE $c\gets\infty$ if $\nu_{\mathrm{old}}=\bot$; otherwise $c\gets|\nu-\nu_{\mathrm{old}}|/\max\{\nu_{\mathrm{old}},10^{-12}\}$
        \STATE $\nu_{\mathrm{old}}\gets\nu$
    \ENDIF
\ENDWHILE

\STATE Apply the eigendecomposition
$\widehat{\Sigma}
=V\operatorname{diag}(\lambda_1,\ldots,\lambda_d)V^\top$
\STATE $e_i\gets\sqrt{\lambda_i}\sum_{j=1}^{d}\sqrt{\lambda_j}/(2\beta)$ for $i=1,\ldots,d$
\STATE $\widehat\Sigma_B\gets V\operatorname{diag}(e_1,\ldots,e_d)V^\top$
\RETURN $\widehat\Sigma_B$
\end{algorithmic}
\end{algorithm}

\noindent\textbf{Private release and generation.}
Once the noise covariance $\widehat{\Sigma}_B$ has been determined, let $X^\star\sim\mathcal D^n$ denote the private dataset used for the final adaptation. We run the training procedure once more to obtain $\phi^\star=\mathrm{Train}(\theta_0,X^\star)$. 
Finally, we release the privatized adaptation $\tilde{\phi} = \phi^\star+B$, where $B\sim\mathcal{N}(0,\widehat \Sigma_B)$.
Next, we show that our full procedure yields $\beta$-RP.
\begin{theorem}[Privacy Guarantee]
\label{thm:pac_diffusion_release}
Let $\Sigma=\operatorname{Cov}(\phi^\star)$ denote the population covariance. Let $\Sigma_B$ be obtained by applying the noise allocation step of Algorithm~\ref{alg:diffusion_pac_calibration} to $\Sigma$ at budget $\beta$. For $B\sim\mathcal N(0,\Sigma_B)$ independent of $X^\star$ and $\phi^\star$, the release $\widetilde\phi=\phi^\star+B$ satisfies $I(X^\star;\widetilde\phi)\leq\beta$. For any collection of images $Y$ generated using $\widetilde\phi$, $I(X^\star;Y)\leq I(X^\star;\widetilde\phi)\leq\beta$.
\end{theorem}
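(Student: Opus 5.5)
The argument is the standard Gaussian-maximizes-entropy bound from PAC privacy~\citep{xiao2023pac}, followed by a Lagrangian computation showing that the allocation $e_i=\sqrt{\lambda_i}\sum_j\sqrt{\lambda_j}/(2\beta)$ meets the budget exactly. We then apply the data-processing inequality for the generated images.

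\textbf{Step 1 (reduce to a log-determinant).} Because $B$ is independent of $(X^\star,\phi^\star)$, the conditional law of $\widetilde\phi$ given $X^\star$ is the law of $\phi^\star$ given $X^\star$ convolved with $\mathcal N(0,\Sigma_B)$. This gives
\[
I(X^\star;\widetilde\phi)=h(\widetilde\phi)-h(\widetilde\phi\mid X^\star).
\]
For the first term, $\operatorname{Cov}(\widetilde\phi)=\Sigma+\Sigma_B$, and the Gaussian maximizes differential entropy under a covariance constraint, so
\[
h(\widetilde\phi)\le \tfrac12\log\bigl((2\pi e)^d\det(\Sigma+\Sigma_B)\bigr).
\]
For the second term, conditioning on $(X^\star,\phi^\star)$ cannot increase entropy, and given $(X^\star,\phi^\star)$ the release $\widetilde\phi$ is just a shift of $B$. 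Hence
\[
h(\widetilde\phi\mid X^\star)\ge h(B)=\tfrac12\log\bigl((2\pi e)^d\det\Sigma_B\bigr).
\]
Combining the two,
\[
I(X^\star;\widetilde\phi)\le \tfrac12\log\det\bigl(I+\Sigma_B^{-1}\Sigma\bigr).
\]
Using $\phi^\star$ rather than the randomness of training is harmless: any extra training randomness only makes the conditional entropy larger.

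\textbf{Step 2 (diagonalize).} By construction $\Sigma_B$ is diagonal in the eigenbasis $V$ of $\Sigma$, so the bound becomes
\[
\tfrac12\sum_i\log(1+\lambda_i/e_i)\le \tfrac12\sum_i \lambda_i/e_i,
\]
using $\log(1+u)\le u$. Zero eigenvalues need a separate remark. For such directions $\phi^\star$ is almost surely constant along $V_i$, so $e_i=0$ is allowed and those directions carry no information; we restrict everything to the support of $\Sigma$.

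\textbf{Step 3 (plug in the allocation).} With $S=\sum_j\sqrt{\lambda_j}$ and $e_i=\sqrt{\lambda_i}\,S/(2\beta)$, we have $\lambda_i/e_i=2\beta\sqrt{\lambda_i}/S$. Summing over $i$ gives $2\beta$, so the bound from Step 2 equals $\tfrac12\cdot 2\beta=\beta$. This proves the first claim.

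\textbf{Step 4 (generated images).} Any generated collection $Y$ is produced from $\widetilde\phi$, together with the public $\theta_0$ and fresh sampling randomness independent of $X^\star$. Thus $X^\star\to\widetilde\phi\to Y$ is a Markov chain, and the data-processing inequality gives $I(X^\star;Y)\le I(X^\star;\widetilde\phi)\le\beta$.

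The main obstacle is Step 1. Its rigor relies on finite second moments of $\phi^\star$, on handling a possibly singular $\Sigma$ (which we restrict to its support as in Step 2), and on interpreting $\operatorname{Cov}(\phi^\star)$ as the covariance over $X^\star\sim\mathcal D^n$ and the training randomness. None of these is a deep difficulty; the rest is algebra.
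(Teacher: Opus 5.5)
Your proof is correct and follows essentially the same route as the paper: the Gaussian maximum-entropy log-determinant bound, diagonalization in the shared eigenbasis $V$, the inequality $\log(1+u)\le u$, and the algebra showing the sum equals exactly $\beta$, then data processing for $Y$. Your bound $h(\widetilde\phi\mid X^\star)\ge h(B)$ is the paper's Markov-chain step $I(X^\star;\widetilde\phi)\le I(\phi^\star;\widetilde\phi)$ folded into the entropy computation. Your restriction to the support of $\Sigma$ also handles zero eigenvalues, where $\Sigma_B^{-1}$ does not exist, a case the paper's proof passes over.
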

\begin{proof}
The mutual-information argument follows the PAC privacy calibration analysis of \citet{sridhar2025pac}. We have the Markov chain $X^\star\rightarrow\phi^\star\rightarrow\widetilde{\phi}$, and therefore $I(X^\star;\widetilde{\phi})
\leq I(\phi^\star;\widetilde{\phi})$. The Gaussian maximum-entropy bound gives
\[
I(\phi^\star;\widetilde{\phi})
\leq
\frac{1}{2}\log\det
\bigl(I+\Sigma\Sigma_B^{-1}\bigr)
=
\frac{1}{2}\sum_{i=1}^{d}
\log\left(1+\frac{\lambda_i}{e_i}\right)
\leq
\frac{1}{2}\sum_{i=1}^{d}\frac{\lambda_i}{e_i}=\beta.
\]
Finally, generation forms the Markov chain
$X^\star\rightarrow\widetilde{\phi}\rightarrow Y$.
The data-processing inequality therefore gives
$I(X^\star;Y)\leq I(X^\star;\widetilde{\phi})\leq\beta$.
\end{proof}
Finally, $\beta$-RP follows immediately by $\Delta_{\mathrm{KL}}(\delta^o,\delta(M))\leq I(X;M(X))$ where $M$ can be either the full training and privatization procedure for $\tilde{\phi}$ or this procedure composed with image generation.

\subsection{Parameter-Efficient Diffusion Adaptation}
The approach above relies on training an adapter $\phi$.
We instantiate it using two concrete adaptation methods: low-rank adaptation (LoRA)~\citep{hu2021lora} and textual inversion.

\noindent\textbf{Low-Rank Adaptation.}
For tasks requiring greater adaptation capacity, we apply Low-Rank Adaptation (LoRA) only to the key and value projections of the cross-attention modules in the denoising network. For a pretrained weight $W_0\in\mathbb R^{d_{\mathrm{out}}\times d_{\mathrm{in}}}$, LoRA parameterizes the adapted weight as $W=W_0+\frac{\alpha}{r}BA$, where $A\in\mathbb R^{r\times d_{\mathrm{in}}}$, $B\in\mathbb R^{d_{\mathrm{out}}\times r}$, and $r$ is the LoRA rank. We keep the remaining model parameters frozen and optimize only $A$ and $B$ on the private dataset, with $A$ randomly initialized and $B$ initialized to zero.

Directly calibrating the raw factors is unstable because different factorizations can represent the same update, since $BA=(c^{-1}B)(cA)$ for any $c\neq0$. We therefore canonicalize each learned pair before covariance estimation. Given $BA=U\operatorname{diag}(s)V^\top$, we set $B^{\mathrm{can}}=U\operatorname{diag}(\sqrt{s})$ and $A^{\mathrm{can}}=\operatorname{diag}(\sqrt{s})V^\top$, with singular values ordered and signs fixed deterministically. This transformation preserves the effective update while removing functionally irrelevant scaling and sign ambiguities. 
We estimate the covariance and apply privatization in this canonicalized parameter space before restoring the privatized factors to their corresponding attention layers for generation.

\noindent\textbf{Textual Inversion.}
When a private concept can be learned without modifying the diffusion model, we adopt Textual Inversion (TI)~\citep{gal2022image}, which introduces a pseudo-token $S_*$ with a learnable embedding $v$. Given private reference images $X$ and a prompt $y$ containing $S_*$, TI freezes the diffusion model, text encoder, and all other token embeddings and optimizes only $v$ as $v^\star(X)=\arg\min_v\mathbb E_{x\sim X,t,\epsilon}[\|\epsilon-\epsilon_{\theta_0}(z_t,t,c(y;v))\|_2^2]$, where $t\sim\mathrm{Unif}(\{1,\ldots,T\})$ and $\epsilon\sim\mathcal N(0,I)$.

\subsection{Privacy Budget Matching}

We can also leverage $(\varepsilon,\delta)$-DP to achieve $\beta$-RP guarantees.
As shown by \citet{dong2022gaussian}, $(\varepsilon,\delta)$-DP is equivalent to $f_{\varepsilon,\delta}$-DP where $f_{\varepsilon,\delta}(\alpha)=\max\{0,1-\delta-e^\varepsilon\alpha,e^{-\varepsilon}(1-\delta-\alpha)\}$.
If the prior reconstruction success probability is $\alpha$, $f_{\varepsilon,\delta}$-DP bounds the posterior success probability by $1-f_{\varepsilon,\delta}(\alpha)$~\citep{kulynych2025unifying}.
Thus, for any $f_{\varepsilon,\delta}(\alpha) > 0$, setting $\beta(\alpha) = D_{\mathrm{KL}}(\mathrm{Bern}(1-f_{\varepsilon,\delta}(\alpha))||\mathrm{Bern}(\alpha))$ ensures $\beta(\alpha)$-RP, allowing us to directly compare the bounds implied by DP and our approach.
In our experiments, we use $\alpha^\star=(1-\delta)/(1+e^\varepsilon)$, with $1-f_{\varepsilon,\delta}(\alpha^\star)=(e^\varepsilon+\delta)/(1+e^\varepsilon)$ for $(\varepsilon,\delta)$-DP, which gives
$\beta_{\varepsilon,\delta}=D_{\mathrm{KL}}(\operatorname{Bern}((e^\varepsilon+\delta)/(1+e^\varepsilon))\,\|\,\operatorname{Bern}((1-\delta)/(1+e^\varepsilon)))$ as the mutual-information budget to use for our approach.

%% file: sections/5.experiment.tex
\section{Experiments}

\subsection{Experimental Setup}

We evaluate the proposed approach, which we refer to as \textsc{PAC} as it builds on PAC privacy tools, in comparison with $(\varepsilon,\delta)$-DP for diffusion model adaptation under reconstruction risk.
In our evaluation, we fix $\varepsilon$ and $\delta$ for DP, use these to recover $\beta(\varepsilon,\delta)$ for our approach (\textsc{PAC}), thereby leveling the corresponding privacy guarantees, and present our results by comparing utility between DP and PAC across tasks, datasets, and efficacy measures.

\noindent\textbf{Tasks and datasets.}
Our evaluation uses DreamBooth~\citep{ruiz2023dreambooth}, CustomConcept101~\citep{kumari2023multi}, and CelebA-HQ~\citep{liu2015deep,karras2017progressive} for few-shot personalized generation. For general data synthesis, we use FashionMNIST~\citep{xiao2017fashion}, EuroSAT~\citep{helber2019eurosat}, and BloodMNIST from MedMNIST v2~\citep{yang2023medmnist}.

\noindent\textbf{Backbones and adaptation mechanisms.}
We use Stable Diffusion V-1.5~\citep{rombach2022high} for few-shot personalized generation with both LoRA and Textual Inversion, and BK-SDM-Small~\citep{kim2024bk} for LoRA-based general data synthesis. Textual Inversion optimizes only the embedding of a special token. LoRA adapts the key and value projections in the cross-attention layers using rank-$1$ factors ($r=1$), while keeping the pretrained backbone frozen. 
Detailed hyperparameters are provided in the Appendix~\ref{app:experimental_details}.

\noindent\textbf{Evaluation.}
For object personalization on DreamBooth and CustomConcept101, we evaluate subject fidelity using DINO-I~\citep{caron2021emerging,oquab2023dinov2} and CLIP-I~\citep{radford2021learning}, text alignment using CLIP-T~\citep{radford2021learning}, and perceptual quality using BRISQUE~\citep{mittal2012no}. For face personalization, following Anti-DreamBooth~\citep{van2023anti}, we select 50 CelebA-HQ identities with 12 reference images each. We compute ISM as the cosine similarity between the normalized ArcFace~\citep{deng2019arcface} embedding of the largest detected generated face and the normalized mean embedding of its reference images. Because ISM excludes images without a detected face, we jointly report FDFR, the fraction of generated images in which no face is detected. We also report DINO-I, CLIP-I, and BRISQUE, but omit CLIP-T because all methods use the same fixed face prompt. Metrics are 
averaged over generation seeds and 
across concepts or identities. For general data synthesis, we follow the train-on-synthetic-test-on-real (TSTR) protocol: a classifier is trained exclusively on class-balanced synthetic images and evaluated on the official real test set. We report test accuracy as the primary metric and provide qualitative samples in the appendix.

\subsection{Few-Shot Personalized Generation}

\begin{figure}[t]
    \centering
    \includegraphics[width=0.8\linewidth]{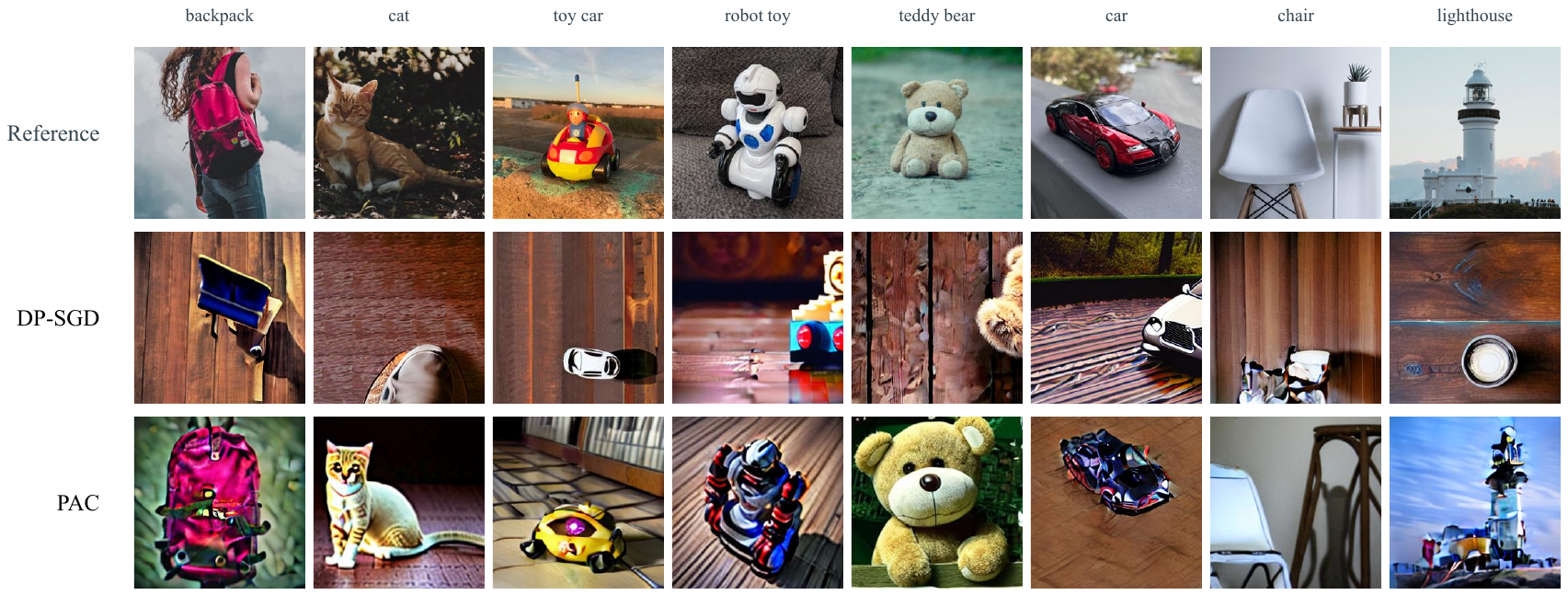}
    \caption{Qualitative comparison of synthetic images generated by PAC-privatized and DP-SGD-trained LoRA adapters at the matched privacy level $\varepsilon=10$ and $\delta=10^{-5}$. 
    }
    \label{fig:example_concept}
\end{figure}

\noindent\textbf{LoRA.} We evaluate LoRA adaptation
on DreamBooth and CustomConcept101, comparing DP-SGD and PAC. Figure~\ref{fig:example_concept} presents qualitative examples of concepts drawn from these datasets at a matched privacy level. We can see that DP-SGD often fails to acquire conceptual information from the reference images. For example, the generated images for \emph{bag} and \emph{lighthouse} do not capture the corresponding concepts. For \emph{teddy bear}, DP-SGD reproduces a fluffy texture but fails to preserve the overall shape. 
In contrast, PAC
better preserves the distinctive attributes learned by the clean adapter. For example, for \emph{robot}, PAC retains the overall appearance, 
and generated examples for \emph{teddy bear} and \emph{bag} also preserve visual characteristics that clearly resemble their reference images. 

\begin{table}[ht]
\centering
\caption{Comparison of PAC and DP-SGD for few-shot personalized generation. Results are averaged over 15 releases and then macro-averaged across subjects.}
\label{tab:fewshot}
\small
\setlength{\tabcolsep}{5pt}
\begin{tabular}{lc cc cc cc cc}
\hline
\multirow{2}{*}{Dataset}
& \multirow{2}{*}{$\varepsilon$}
& \multicolumn{2}{c}{DINO-I $\uparrow$}
& \multicolumn{2}{c}{CLIP-I $\uparrow$}
& \multicolumn{2}{c}{CLIP-T $\uparrow$}
& \multicolumn{2}{c}{BRISQUE $\downarrow$} \\
& & PAC & DP
  & PAC & DP
  & PAC & DP
  & PAC & DP \\
\hline

\multirow{5}{*}{DreamBooth}
& 3
& \textbf{.232} & .140
& \textbf{.596} & .577
& .212 & \textbf{.232}
& 30.2 & \textbf{29.4} \\

& 5
& \textbf{.306} & .141
& \textbf{.627} & .577
& .232 & \textbf{.233}
& \textbf{27.7} & 28.0 \\

& 10
& \textbf{.352} & .149
& \textbf{.645} & .579
& \textbf{.245} & .235
& 26.2 & \textbf{25.6} \\

& 20
& \textbf{.373} & .158
& \textbf{.655} & .581
& \textbf{.252} & .240
& 25.4 & \textbf{25.0} \\

& 50
& \textbf{.379} & .179
& \textbf{.657} & .589
& \textbf{.254} & .247
& 25.1 & \textbf{21.8} \\
\hline

\multirow{5}{*}{CustomConcept101}
& 3
& \textbf{.199} & .131
& \textbf{.561} & .542
& .200 & \textbf{.233}
& 31.7 & \textbf{27.5} \\
& 5
& \textbf{.286} & .129
& \textbf{.596} & .543
& .221 & \textbf{.232}
& \textbf{27.8} & \textbf{27.8} \\
& 10
& \textbf{.353} & .141
& \textbf{.625} & .546
& \textbf{.236} & \textbf{.236}
& 25.3 & \textbf{24.5} \\
& 20
& \textbf{.383} & .152
& \textbf{.639} & .550
& \textbf{.244} & .240
& \textbf{24.2} & 24.4 \\
& 50
& \textbf{.390} & .174
& \textbf{.642} & .557
& \textbf{.247} & \textbf{.247}
& 23.9 & \textbf{22.0} \\
\hline
\end{tabular}
\end{table}
Table~\ref{tab:fewshot} presents the quantitative comparison. On both datasets, PAC achieves substantially higher concept fidelity across all evaluated privacy budgets. On DreamBooth, PAC improves DINO-I from 0.140 to 0.232 at $\varepsilon=3$, from 0.149 to 0.352 at $\varepsilon=10$, and from 0.179 to 0.379 at $\varepsilon=50$. The corresponding CLIP-I scores increase from 0.577 to 0.596, from 0.579 to 0.645, and from 0.589 to 0.657, respectively. A similar trend appears on CustomConcept101. PAC improves DINO-I from 0.131 to 0.199 at $\varepsilon=3$, from 0.141 to 0.353 at $\varepsilon=10$, and from 0.174 to 0.390 at $\varepsilon=50$. It also improves CLIP-I from 0.542 to 0.561, from 0.546 to 0.625, and from 0.557 to 0.642 at the same budgets. PAC sacrifices some text alignment under tight privacy constraints, but this gap closes as the budget increases: on DreamBooth, PAC surpasses DP-SGD in CLIP-T from $\varepsilon=10$ onward, while on CustomConcept101 the two methods are comparable at larger budgets. BRISQUE does not exhibit a consistent winner, indicating that PAC's advantage primarily comes from preserving concept-specific information rather than uniformly improving perceptual quality. Overall, PAC retains substantially more of the learned concept than DP-SGD under matched privacy budgets.

\begin{figure}[ht]
    \centering
    \includegraphics[width=0.8\linewidth]{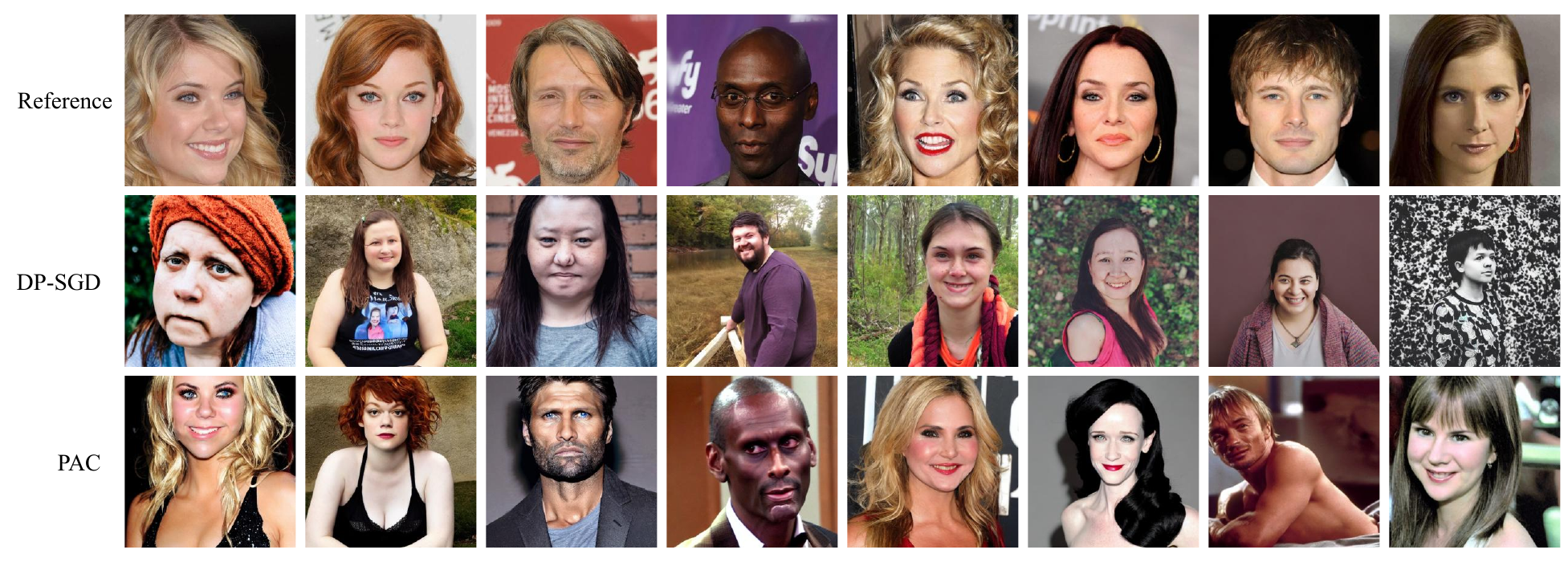}
    \caption{Qualitative comparison of private personalized generation via TI on CelebA-HQ at $\varepsilon=10$ and $\delta=10^{-5}$. 
    }
    \label{fig:example_face}
\end{figure}

\begin{table}[ht]
\centering
\caption{Private Textual Inversion results on CelebA-HQ. Results are averaged over 50 identities. All private mechanisms use $\delta=10^{-5}$.}
\label{tab:celeba_ti}
\small
\begin{tabular}{c cc cc cc cc cc}
\hline
\multirow{2}{*}{$\varepsilon$}
& \multicolumn{2}{c}{ISM $\uparrow$}
& \multicolumn{2}{c}{DINO-I $\uparrow$}
& \multicolumn{2}{c}{CLIP-I $\uparrow$}
& \multicolumn{2}{c}{BRISQUE $\downarrow$}
& \multicolumn{2}{c}{FDFR $\downarrow$} \\
& PAC & DP
& PAC & DP
& PAC & DP
& PAC & DP
& PAC & DP \\
\hline
3
& \textbf{.037} & .002
& \textbf{.301} & .197
& \textbf{.522} & .471
& \textbf{7.1} & \textbf{7.1}
& .24 & \textbf{.22} \\
5
& \textbf{.057} & .002
& \textbf{.347} & .198
& \textbf{.554} & .471
& \textbf{6.5} & 7.0
& .23 & \textbf{.22} \\
10
& \textbf{.103} & .002
& \textbf{.409} & .200
& \textbf{.603} & .471
& \textbf{6.0} & 6.9
& .23 & \textbf{.22} \\
20
& \textbf{.165} & .003
& \textbf{.458} & .205
& \textbf{.650} & .472
& \textbf{5.4} & 6.8
& \textbf{.21} & .22 \\
50
& \textbf{.195} & .004
& \textbf{.478} & .214
& \textbf{.670} & .474
& \textbf{5.3} & 6.9
& \textbf{.21} & .22 \\
\hline
\end{tabular}
\end{table}

\noindent\textbf{Textual Inversion.} We compare the performance of TI privatized using DP-SGD and PAC on CelebA-HQ. Figure~\ref{fig:example_face} presents qualitative results. DP-SGD largely fails to capture identity-specific facial characteristics, whereas
PAC preserves substantially more facial structure and identity information.
The quantitative results in Table~\ref{tab:celeba_ti} support this observation. Even at $\varepsilon=3$, PAC improves ISM from 0.002 to 0.037, DINO-I from 0.197 to 0.301, and CLIP-I from 0.471 to 0.522 compared with DP-SGD. As the privacy budget increases, PAC improves steadily, reaching 0.195 ISM, 0.478 DINO-I, and 0.670 CLIP-I at $\varepsilon=50$, whereas DP-SGD remains nearly unchanged across all budgets. PAC also achieves better perceptual quality for $\varepsilon\geq5$, while the two methods exhibit comparable face-detection failure rates. Overall, final-output PAC perturbation preserves personalized identity considerably better than iterative DP-SGD perturbation for Textual Inversion.

\begin{table}[ht]
\centering
\caption{TSTR test accuracy (\%) across privacy budgets. Each entry reports the mean $\pm$ the half-width of the 95\% confidence interval.}
\label{tab:general_synthesis}
\small
\setlength{\tabcolsep}{6pt}
\begin{tabular}{llccccc}
\hline
Dataset & Method
& $\varepsilon=3$
& $\varepsilon=5$
& $\varepsilon=10$
& $\varepsilon=20$
& $\varepsilon=50$ \\
\hline

\multirow{2}{*}{FMNIST}
& DP
& $35.7\pm4.9$
& $36.1\pm5.9$
& $37.2\pm6.5$
& $39.4\pm4.4$
& $39.9\pm6.9$ \\
& PAC
& $\bm{36.4\pm4.8}$
& $\bm{37.7\pm4.5}$
& $\bm{43.1\pm4.6}$
& $\bm{50.9\pm5.6}$
& $\bm{52.6\pm4.2}$ \\
\hline

\multirow{2}{*}{EuroSAT}
& DP
& $33.1\pm2.9$
& $34.3\pm2.8$
& $35.5\pm2.0$
& $35.8\pm2.2$
& $37.6\pm3.2$ \\
& PAC
& $\bm{33.4\pm3.0}$
& $\bm{36.6\pm3.9}$
& $\bm{39.6\pm3.8}$
& $\bm{44.1\pm3.4}$
& $\bm{46.4\pm3.8}$ \\
\hline

\multirow{2}{*}{BloodMNIST}
& DP
& $12.3\pm3.2$
& $13.3\pm3.5$
& $14.1\pm3.8$
& $14.5\pm3.7$
& $14.8\pm2.8$ \\
& PAC
& $\bm{12.7\pm1.6}$
& $\bm{18.5\pm2.5}$
& $\bm{26.4\pm5.8}$
& $\bm{28.6\pm6.8}$
& $\bm{30.2\pm5.0}$ \\
\hline
\end{tabular}
\end{table}

\subsection{General Data Synthesis}

We further evaluate PAC-private LoRA adaptation for general data synthesis. 
Table~\ref{tab:general_synthesis} reports the resulting test accuracy with 95\% confidence intervals. 
Both PAC and DP-SGD reduce utility because they introduce noise into the learned model. However, compared with DP-SGD, PAC preserves substantially more of the utility provided by the clean adapter. At $\varepsilon=50$, which corresponds to a very loose privacy budget, the synthetic data generated after PAC perturbation achieve test accuracies of 52.6\% on FashionMNIST, 46.4\% on EuroSAT, and 30.2\% on BloodMNIST, approaching the corresponding clean LoRA ceilings. In contrast, DP-SGD yields average downstream accuracies below 40\% on FashionMNIST and EuroSAT and only 14.8\% on BloodMNIST. When the budget decreases to $\varepsilon=10$, PAC still exceeds DP-SGD by 5.9\%, 4.1\%, and 12.3\%  on the three datasets, respectively. Under tighter privacy budgets, both PAC and DP-SGD struggle to preserve the utility of the clean adapter, although PAC continues to achieve slightly higher average accuracy.

\begin{table}[h]
\centering
\caption{Effect of LoRA canonicalization on FashionMNIST. Each entry reports TSTR test accuracy (\%) as the mean $\pm$ the half-width of the 95\% confidence interval.}
\label{tab:canonicalization_ablation}
\small
\setlength{\tabcolsep}{6pt}
\begin{tabular}{lccccc}
\hline
Mechanism & $\varepsilon=3$ & $\varepsilon=5$ & $\varepsilon=10$ & $\varepsilon=20$ & $\varepsilon=50$ \\
\hline
Raw factors
& $34.9\pm4.4$
& $35.6\pm5.8$
& $40.6\pm4.3$
& $48.3\pm4.4$
& $50.6\pm4.6$ \\
Canonicalized factors
& \bm{$36.4\pm4.8$}
& \bm{$37.7\pm4.5$}
& \bm{$43.1\pm4.6$}
& \bm{$50.9\pm5.6$}
& \bm{$52.6\pm4.2$} \\
\hline
\end{tabular}
\end{table}

\subsection{Effect of LoRA Canonicalization}

Finally, we evaluate the effect of the
LoRA canonicalization step 
on PAC privatization. The raw-factor and canonicalized variants differ only in whether canonicalization is applied before covariance estimation and noise perturbation. Table~\ref{tab:canonicalization_ablation} shows that 
canonicalization improves TSTR accuracy by 1.5, 2.1, 2.5, 2.6, and 2.0 percentage points at $\varepsilon\in\{3,5,10,20,50\}$, respectively. Although the confidence intervals overlap at individual privacy budgets, the canonicalized variant achieves a higher mean accuracy at every operating point. This consistent trend indicates that eliminating the arbitrary scale of equivalent LoRA factorizations improves the stability and utility of PAC noise calibration.

%% file: sections/6.conclusion.tex
\section{Conclusion}
In this work, we propose an alternative to DP-SGD for private diffusion generation by perturbing the learned output once at the end of training, thereby avoiding privacy composition across optimization steps. We consider a distribution reconstruction privacy notion and derive a reconstruction-based conversion that maps $(\varepsilon,\delta)$-DP parameters to a comparable mutual-information budget. Given this budget, we repeatedly execute the training mechanism, estimate the covariance of its outputs, and calibrate an anisotropic noise distribution for the final release. To make this procedure computationally feasible for diffusion models, we instantiate it with two low-dimensional adaptation mechanisms: LoRA and Textual Inversion. For LoRA, we further introduce a function-preserving canonicalization that removes factor-scale ambiguity and improves calibration stability. We evaluate the framework on both general-purpose data synthesis and few-shot personalized generation. The results show that the proposed approach preserves substantially more downstream and concept-specific utility than DP-SGD across a broad range of matched privacy budgets.

\section*{AI Use Statement}
We used generative AI tools to improve the clarity and grammar of the manuscript and to identify typographical errors. We also used these tools to assist with code debugging, LaTeX formatting, and the presentation of figures and tables. The synthetic image datasets evaluated in this work were generated by the diffusion models studied in the paper as part of our experimental methodology, rather than by an external AI assistant. We did not use generative AI tools for research problem formulation or ideation. The authors reviewed all AI-assisted content and verified the assisted code through executed experiments. All reported numerical results were obtained from these experiments and were not generated by an AI assistant. We take full responsibility for the final content of this work, including any content produced with the assistance of generative AI tools.

\section*{Ethics Statement}
This work studies privacy-preserving image synthesis and is intended to reduce the risk of reconstructing sensitive training data from released generative models or synthetic images. Our experiments use publicly available benchmark datasets, and we do not collect new human-subject data. CelebA-HQ contains identifiable facial images and is used only for evaluating privacy-preserving personalization under the established benchmark protocol. 

\section*{Reproducibility Statement}
We provide the complete PAC calibration procedure in Algorithm~\ref{alg:diffusion_pac_calibration}. The appendix documents the model configurations, optimization settings, generation settings, and evaluation metrics. The supplementary material includes the core implementation of our method.

%% file: sections/appendixa.tex
\section{Additional Theoretical Background}

\subsection{\texorpdfstring{$f$-Differential Privacy}{f-Differential Privacy}}

We briefly review $f$-differential privacy ($f$-DP), which characterizes privacy through binary hypothesis testing between the outputs of a mechanism on neighboring datasets~\cite{dong2022gaussian}.

\begin{definition}[Trade-off Function]
Let $P$ and $Q$ be two distributions over the same output space. For a hypothesis test $\psi$, let $\alpha_\psi=\mathbb E_P[\psi]$ and $\beta_\psi=1-\mathbb E_Q[\psi]$ denote its type-I and type-II errors. The trade-off function between $P$ and $Q$ is
\begin{equation}
T(P,Q)(\alpha):=\inf_{\psi:\,\alpha_\psi\leq\alpha}\beta_\psi,
\qquad \alpha\in[0,1].
\end{equation}
\end{definition}

\begin{definition}[$f$-Differential Privacy~\cite{dong2022gaussian}]
Let $f:[0,1]\rightarrow[0,1]$ be a trade-off function. A randomized mechanism $M$ satisfies $f$-DP if, for every pair of neighboring datasets $D\simeq D'$,
\begin{equation}
T(M(D),M(D'))(\alpha)\geq f(\alpha),
\qquad \forall\alpha\in[0,1].
\end{equation}
\end{definition}

The standard $(\varepsilon,\delta)$-DP guarantee is equivalent to $f_{\varepsilon,\delta}$-DP, where
\begin{equation}
f_{\varepsilon,\delta}(\alpha)
=
\max\{0,\,
1-\delta-e^\varepsilon\alpha,\,
e^{-\varepsilon}(1-\delta-\alpha)\}.
\label{eq:eps_delta_fdp}
\end{equation}

\subsection{Reconstruction Advantage Provided by DP}

\citet{kulynych2025unifying} use strong reconstruction robustness (SRR) to interpret differential privacy in terms of reconstruction risk. SRR considers a strong adversary that knows all but one record of the input dataset and attempts to reconstruct the remaining record from the mechanism output. We show that the resulting reconstruction-advantage bound also applies to the weaker adversary considered in our setting, who observes only the mechanism output.

\begin{definition}[Datapoint Reconstruction Advantage~\citep{kulynych2025unifying}]
\label{def:datapoint_reconstruction}
Let $\mathcal X$ be the data domain, $\mathcal D$ a distribution over $\mathcal X$, and $M:\mathcal X^n\rightarrow\Theta$ a randomized mechanism. Let $\bar D\in\mathcal X^{n-1}$ be a fixed partial dataset known to the adversary, and let $Z\sim\mathcal D$ denote the remaining target record. Given a distance function $d:\mathcal X\times\mathcal X\rightarrow\mathbb R$ and a reconstruction radius $r$, define the optimal prior reconstruction error as
\begin{equation}
\delta_p^{o}:=1-\sup_{\widehat z\in\mathcal X}\Pr_{Z\sim\mathcal D}\bigl[d(Z,\widehat z)\leq r\bigr].
\end{equation}

Let $O\sim M(\bar D\cup\{Z\})$ be the released mechanism output. After observing both $\bar D$ and $O$, the optimal posterior reconstruction
error is
\begin{equation}
\delta_p(M\mid\bar D):=1-\sup_{\mathcal A} \Pr_{\substack{
Z\sim\mathcal D\\
O\sim M(\bar D\cup\{Z\})
}}\bigl[d\bigl(Z,\mathcal A(\bar D,O)\bigr)\leq r\bigr],
\end{equation}
where
$\mathcal A:\mathcal X^{n-1}\times\Theta\rightarrow\mathcal X$ is a reconstruction attack. The SRR reconstruction advantage is
\begin{equation}
\operatorname{Adv}
(M,\bar D,\mathcal D;d,r):=\delta_p^{o}-\delta_p(M\mid\bar D).
\end{equation}
\end{definition}

The datapoint reconstruction advantage can be bounded using $f$-DP.
\begin{theorem}[Reconstruction Advantage under $(\varepsilon,\delta)$-DP~\citep{kulynych2025unifying}]
\label{thm:fdp_reconstruction}
If $M$ satisfies $(\varepsilon,\delta)$-DP, then, for every
partial dataset $\bar D$,
\begin{equation}
\operatorname{Adv}
(M,\bar D,\mathcal D;d,r)
\leq
\delta_p^o
-
f_{\varepsilon,\delta}(1-\delta_p^o).
\end{equation}
Equivalently,
\begin{equation}
\delta_p(M\mid\bar D)
\geq
f_{\varepsilon,\delta}(1-\delta_p^o).
\end{equation}
\end{theorem}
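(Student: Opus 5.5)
The plan is to reduce reconstruction to a hypothesis test between the output on the full dataset and the output on a neighboring dataset. On the neighboring dataset, the target record has no influence on the output. We then average the per-record trade-off guarantee over the target record, using convexity of $f_{\varepsilon,\delta}$.

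\emph{Setup.} Fix $\bar D$ and an arbitrary (possibly randomized) attack $\mathcal A$; we absorb its internal coins into the mechanism's randomness. Work with the add/remove neighbor pair $\bar D\cup\{z\}\simeq\bar D$. If one insists that $M$ act only on $\mathcal X^n$, the same argument runs with the pair $\bar D\cup\{z\}$ and $\bar D\cup\{z_0\}$ for a fixed reference record $z_0$, under a replace-one neighboring notion. Let $P=M(\bar D)$ and, for each $z\in\mathcal X$, let $Q_z=M(\bar D\cup\{z\})$. For fixed $z$, define the test
\[
\psi_z(o)=\mathbf 1\{d(z,\mathcal A(\bar D,o))\le r\}.
\]
Its type-I error is $\alpha_z=\mathbb E_P[\psi_z]$, and its type-II error is $1-\mathbb E_{Q_z}[\psi_z]$.

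\emph{Per-record bound.} By the $f_{\varepsilon,\delta}$-DP characterization of $(\varepsilon,\delta)$-DP recalled above, $T(P,Q_z)\ge f_{\varepsilon,\delta}$. Hence
\[
\mathbb E_{Q_z}[\psi_z]\le 1-f_{\varepsilon,\delta}(\alpha_z).
\]
Taking expectation over $Z\sim\mathcal D$ gives
\[
\Pr_{Z,\,O\sim Q_Z}\bigl[d(Z,\mathcal A(\bar D,O))\le r\bigr]\le \mathbb E_Z\bigl[1-f_{\varepsilon,\delta}(\alpha_Z)\bigr].
\]

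\emph{Averaging step.} This is the only delicate point. The function $f_{\varepsilon,\delta}$ is a maximum of affine functions, so it is convex, and it is nonincreasing. By Jensen's inequality,
\[
\mathbb E_Z\bigl[1-f_{\varepsilon,\delta}(\alpha_Z)\bigr]\le 1-f_{\varepsilon,\delta}(\bar\alpha),
\qquad \bar\alpha=\mathbb E_Z[\alpha_Z].
\]
Now $\bar\alpha=\Pr[d(Z,\mathcal A(\bar D,O'))\le r]$ with $O'\sim M(\bar D)$ independent of $Z$. Conditioning on $O'$, the guess $\widehat z=\mathcal A(\bar D,O')$ is a fixed point independent of $Z$, so its success probability is at most $\sup_{\widehat z}\Pr_Z[d(Z,\widehat z)\le r]=1-\delta_p^o$. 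Thus $\bar\alpha\le 1-\delta_p^o$. Since $1-f_{\varepsilon,\delta}$ is nondecreasing,
\[
\Pr\bigl[d(Z,\mathcal A(\bar D,O))\le r\bigr]\le 1-f_{\varepsilon,\delta}(1-\delta_p^o).
\]

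\emph{Conclusion.} Taking the supremum over $\mathcal A$ yields $\delta_p(M\mid\bar D)\ge f_{\varepsilon,\delta}(1-\delta_p^o)$. Subtracting this from $\delta_p^o$ gives the stated bound on $\operatorname{Adv}(M,\bar D,\mathcal D;d,r)$.

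The main thing to check is the Jensen/monotonicity step, namely convexity and monotonicity of $f_{\varepsilon,\delta}$. Routine measurability of $z\mapsto\alpha_z$ is assumed.
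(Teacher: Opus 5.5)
The paper gives no proof of this statement. It is imported from \citet{kulynych2025unifying} and invoked as a black box in the proof of Corollary~\ref{cor:output_reconstruction}, so there is no in-paper argument to match.

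Your derivation is correct and self-contained, and it is in substance the standard hypothesis-testing reduction behind $f$-DP reconstruction bounds. The chain is:
\begin{itemize}
\item The success indicator of an attack is a test. Its power under $M(\bar D\cup\{z\})$ is at most $1-f_{\varepsilon,\delta}$ evaluated at its type-I error under the removal world $M(\bar D)$.
\item Convexity of $f_{\varepsilon,\delta}$ lets you average this over $Z\sim\mathcal D$.
\item Independence of $O'$ and $Z$ caps the averaged type-I error by the prior success rate $1-\delta_p^o$.
\item Monotonicity of $f_{\varepsilon,\delta}$ finishes the argument.
\end{itemize}
Writing it out shows that only convexity and monotonicity of the trade-off curve are used. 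The same proof therefore gives $\delta_p(M\mid\bar D)\ge f(1-\delta_p^o)$ for any $f$-DP mechanism, not just $(\varepsilon,\delta)$-DP.

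One point on the neighboring relation. Your main route needs $M$ to be defined on the $(n-1)$-record dataset $\bar D$. That is exactly what the paper's add/remove definition of DP implicitly requires, despite the typing $M:\mathcal X^n\to\Theta$, so it is the right route here. Your replace-one fallback is valid under replace-one DP. Under the paper's add/remove definition, however, $\bar D\cup\{z\}$ and $\bar D\cup\{z_0\}$ are two steps apart. That route would then only give the bound with the group-privacy parameters $(2\varepsilon,(1+e^{\varepsilon})\delta)$, not $(\varepsilon,\delta)$.
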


\begin{corollary}[Output-Only Reconstruction Advantage]
\label{cor:output_reconstruction}
Let $\bar D\sim\mathcal D^{n-1}$, and let $\delta_p(M)$ denote the optimal reconstruction error of an adversary that observes only the mechanism output. If $M$ satisfies $(\varepsilon,\delta)$-DP, then
\begin{equation}
\delta_p^o-\delta_p(M)
\leq
\delta_p^o
-
f_{\varepsilon,\delta}(1-\delta_p^o).
\label{eq:dp_reconstruction_advantage}
\end{equation}
\end{corollary}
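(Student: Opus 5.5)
The plan is to reduce the output-only adversary to the strong SRR adversary of Theorem~\ref{thm:fdp_reconstruction} and then average over the partial dataset. Concretely, write the private dataset as $\bar D\cup\{Z\}$ with $\bar D\sim\mathcal D^{n-1}$ and $Z\sim\mathcal D$ independent, and let $\mathcal A$ be any output-only attack $\mathcal A:\Theta\to\mathcal X$. The key observation is that $\mathcal A$ induces an SRR attack $\mathcal A'(\bar D,O):=\mathcal A(O)$ that simply ignores its first argument. Therefore, for each fixed $\bar D$,
\[
\Pr_{Z,O}\bigl[d(Z,\mathcal A(O))\le r\bigr]\le 1-\delta_p(M\mid\bar D)\le 1-f_{\varepsilon,\delta}(1-\delta_p^o),
\]
where the last inequality is Theorem~\ref{thm:fdp_reconstruction}. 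This bound holds for every $\bar D$.

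Next, integrate over $\bar D\sim\mathcal D^{n-1}$. By the tower property, the output-only success probability $\Pr_{\bar D,Z,O}[d(Z,\mathcal A(O))\le r]$ is the expectation over $\bar D$ of the conditional success probability. Since the bound $1-f_{\varepsilon,\delta}(1-\delta_p^o)$ does not depend on $\bar D$, the expectation inherits it. Taking the supremum over $\mathcal A$ then gives $1-\delta_p(M)\le 1-f_{\varepsilon,\delta}(1-\delta_p^o)$, that is, $\delta_p(M)\ge f_{\varepsilon,\delta}(1-\delta_p^o)$. Subtracting both sides from $\delta_p^o$ yields \eqref{eq:dp_reconstruction_advantage}. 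Note that $\delta_p^o$ depends only on $\mathcal D$, so it is the same quantity in the SRR and output-only settings and no separate argument is needed for it.

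The main obstacle is making sure the two suprema are compared correctly, with no measurability or randomization subtleties. A randomized output-only attack can be handled by treating its internal randomness as part of $\mathcal A$: conditioning on that randomness reduces it to a mixture of deterministic attacks, and the same per-$\bar D$ bound applies to each. The other point to state explicitly is that Theorem~\ref{thm:fdp_reconstruction} is uniform in $\bar D$, which is what allows the averaging step. With that uniformity in hand, the rest is a one-line expectation argument.
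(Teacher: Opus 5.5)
Your proposal is correct and follows essentially the same argument as the paper: you lift an output-only attack $\mathcal A_0$ to the informed attack $\mathcal A(\bar D,O)=\mathcal A_0(O)$, bound its conditional success given each $\bar d$ by $1-\delta_p(M\mid\bar D=\bar d)$, invoke Theorem~\ref{thm:fdp_reconstruction} uniformly in $\bar d$, and average over $\bar D$. The only difference is ordering: the paper first averages the conditional optimal errors and then applies the theorem, while you apply the theorem for each $\bar D$ before averaging, which makes no difference.
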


\begin{proof}
An adversary observing only the mechanism output is no stronger than one observing both the output and the partial dataset. More precisely, for any output-only attack $\mathcal A_0$, define the corresponding informed attack by
$\mathcal A(\bar D,O)=\mathcal A_0(O)$. Therefore,
\begin{equation}
\begin{aligned}
1-\delta_p(M)
&\leq
\int
\sup_{\mathcal A}
\Pr\bigl[
d(Z,\mathcal A(\bar d,O))\leq r
\mid \bar D=\bar d
\bigr]
p(\bar d)\,d\bar d \\
&=
1-
\int
\delta_p(M\mid\bar D=\bar d)
p(\bar d)\,d\bar d.
\end{aligned}
\end{equation}
Equivalently,
\begin{equation}
\delta_p(M)
\geq
\int
\delta_p(M\mid\bar D=\bar d)
p(\bar d)\,d\bar d.
\end{equation}
By Theorem~\ref{thm:fdp_reconstruction}, $\delta_p(M\mid\bar D=\bar d)
\geq
f_{\varepsilon,\delta}(1-\delta_p^o)$ for every $\bar d$. Hence, $\delta_p(M)
\geq
f_{\varepsilon,\delta}(1-\delta_p^o).$ Subtracting both sides from $\delta_p^o$ proves the result.
\end{proof}
Let $\alpha:=1-\delta_p^o$ denote the optimal prior reconstruction success rate. Corollary~\ref{cor:output_reconstruction} implies $1-\delta_p(M)\leq 1-f_{\varepsilon,\delta}(\alpha)$. For any fixed $\alpha$ satisfying $f_{\varepsilon,\delta}(\alpha)>0$, we define $\beta(\alpha):=D_{\mathrm{KL}}(\operatorname{Bern}(1-f_{\varepsilon,\delta}(\alpha)) \,\|\,\operatorname{Bern}(\alpha))$. Because the KL reconstruction advantage is bounded by mutual information, $I(X;M(X))\leq\beta(\alpha)$ gives the same upper bound on posterior reconstruction success as $(\varepsilon,\delta)$-DP at the fixed prior success rate $\alpha$.

%% file: sections/appendixb.tex
\section{Experimental Details}
\label{app:experimental_details}

\subsection{Training Configurations}
Table~\ref{tab:training_details} summarizes the main training configurations. All experiments use the AdamW optimizer. PAC and DP-SGD use identical pretrained models, initialization, adaptation architecture, optimization schedule, preprocessing, and training prompts.

\begin{table}[ht]
\centering
\caption{Training configurations used in our experiments. Generation resolution matches training resolution.}
\label{tab:training_details}
\resizebox{\textwidth}{!}{
\begin{tabular}{lccccccc}
\hline
Dataset & Backbone & Resolution & Adaptation & Batch & Updates & Learning rate & Warm-up \\
\hline
DreamBooth & SD V-1.5 & $256\times 256$ & LoRA, $r=1$ & 4 & 250 & $2\times10^{-4}$ & 7  \\
CustomConcept101 & SD V-1.5 & $256\times 256$ & LoRA, $r=1$ & 4 & 250 & $2\times10^{-4}$ & 7 \\
CelebA-HQ & SD V-1.5 & $512\times512$ & Textual Inversion & 1 & 500 & $5\times10^{-4}$ & 0  \\
FashionMNIST & BK-SDM-Small & $256\times 256$ & LoRA, $r=1$ & 8 & 7,500 & $10^{-4}$ & 25  \\
EuroSAT & BK-SDM-Small & $256\times 256$ & LoRA, $r=1$ & 8 & 8,100 & $10^{-4}$ & 25  \\
BloodMNIST & BK-SDM-Small & $256\times 256$ & LoRA, $r=1$ & 8 & 7,475 & $10^{-4}$ & 25 \\
\hline
\end{tabular}
}
\end{table}

For DreamBooth and CustomConcept101, we train rank-one LoRA adapters on the key and value projections of the cross-attention layers. The private reference sets contain 4--6 images per DreamBooth subject and 3--15 images per CustomConcept101 subject. Subject identifiers are used only to locate the reference images and organize the outputs. The text condition always uses the associated class word. For example, \texttt{dog2} is trained with ``a photo of a dog,'' \texttt{pet\_cat3} with ``a photo of a cat,'' and \texttt{transport\_car10} with ``a photo of a car.'' Consequently, subjects belonging to the same class receive identical prompts and are distinguished only through their private reference images and learned LoRA adapters. For CelebA-HQ, we optimize one 768-dimensional Textual Inversion embedding using all 12 reference images of each identity. At every update, the training prompt is sampled uniformly from the eight standard Textual Inversion templates. For FashionMNIST, EuroSAT, and BloodMNIST, we use class-specific prompt templates tailored to the visual domain of each dataset. The details are provided in Table~\ref{tab:training_prompts}.
\begin{table}[!t]
\centering
\caption{Training prompts used in our experiments. For LoRA personalization, \{class\} denotes the semantic class specified by the dataset metadata rather than the subject-directory name.}
\label{tab:training_prompts}
\scriptsize
\setlength{\tabcolsep}{6pt}
\renewcommand{\arraystretch}{1.15}
\begin{tabular}{p{3.0cm} p{8.0cm}}
\hline
Dataset & Training prompts \\
\hline

DreamBooth
&
\texttt{a photo of a \{class\}}
\\

CustomConcept101
&
\texttt{a photo of a \{class\}}
\\

CelebA-HQ
&
\begin{tabular}[t]{@{}l@{}}
\texttt{a photo of sks person} \\
\texttt{a rendering of sks person} \\
\texttt{a cropped photo of sks person} \\
\texttt{the photo of sks person} \\
\texttt{a close-up photo of sks person} \\
\texttt{a bright photo of sks person} \\
\texttt{a good photo of sks person} \\
\texttt{a photo of one sks person}
\end{tabular}
\\
FashionMNIST
&
\begin{tabular}[t]{@{}l@{}}
\textbf{Template:} \texttt{a centered grayscale product image of \{X\}} \\
\phantom{\textbf{Template:} }\texttt{on a plain black background} \\
\textbf{Class descriptions:} \\
\texttt{a short-sleeved T-shirt} \\
\texttt{a pair of trousers} \\
\texttt{a pullover sweater} \\
\texttt{a dress} \\
\texttt{a coat} \\
\texttt{a sandal} \\
\texttt{a long-sleeved shirt} \\
\texttt{a sneaker} \\
\texttt{a bag} \\
\texttt{an ankle boot}
\end{tabular}
\\

EuroSAT
&
\begin{tabular}[t]{@{}l@{}}
\textbf{Template:} \texttt{an overhead satellite image of \{X\}} \\
\textbf{Class descriptions:} \\
\texttt{annual cropland} \\
\texttt{a forest} \\
\texttt{herbaceous vegetation} \\
\texttt{a highway} \\
\texttt{an industrial area} \\
\texttt{pasture land} \\
\texttt{permanent cropland} \\
\texttt{a residential area} \\
\texttt{a river} \\
\texttt{a sea or lake}
\end{tabular}
\\

BloodMNIST
&
\begin{tabular}[t]{@{}l@{}}
\textbf{Template:} \texttt{a microscopy image of \{X\}} \\
\textbf{Class descriptions:} \\
\texttt{a basophil blood cell} \\
\texttt{an eosinophil blood cell} \\
\texttt{an erythroblast} \\
\texttt{an immature granulocyte} \\
\texttt{a lymphocyte blood cell} \\
\texttt{a monocyte blood cell} \\
\texttt{a neutrophil blood cell} \\
\texttt{a platelet}
\end{tabular}
\\
\hline
\end{tabular}
\end{table}

\paragraph{Privacy mechanisms.}
DP-SGD clips each per-example gradient to $C=0.01$. Given a private dataset of size $n$ and a minibatch of size $b$, the sampling rate is $q=\min\{b/n,1\}$. We use the PRV accountant~\cite{gopi2021numerical} to determine the noise multiplier for each target $\varepsilon$, with $\delta=10^{-5}$. For PAC calibration, we repeat the complete training procedure $K=96$ times to estimate the output covariance. At $K=96$, the relative change in the trace of the estimated covariance falls below the convergence threshold $\tau=0.05$.

\subsection{Evaluation Settings}

\begin{table}[!t]
\centering
\caption{Generation and evaluation prompts used in our experiments. For object personalization, \{class\} is replaced by the semantic class associated with each subject; no identifier token is used. General-synthesis prompts are identical to their training prompts in Table~\ref{tab:training_prompts}.}
\label{tab:evaluation_prompts}
\scriptsize
\setlength{\tabcolsep}{1pt}
\begin{tabular}{p{3.2cm} p{9.6cm}}
\hline
Dataset & Generation and evaluation prompts \\
\hline

DreamBooth and CustomConcept101
&
\begin{tabular}[t]{@{}l@{}}
\texttt{a \{class\} in the jungle} \\
\texttt{a \{class\} in the snow} \\
\texttt{a \{class\} on the beach} \\
\texttt{a \{class\} on a cobblestone street} \\
\texttt{a \{class\} on top of pink fabric} \\
\texttt{a \{class\} on top of a wooden floor} \\
\texttt{a \{class\} with a city in the background} \\
\texttt{a \{class\} with a mountain in the background} \\
\texttt{a \{class\} with the Eiffel Tower in the background} \\
\texttt{a \{class\} floating on top of water}
\end{tabular}
\\

CelebA-HQ
&
\begin{tabular}[t]{@{}l@{}}
\textbf{Generation:} \texttt{a photo of sks person} \\
\textbf{Class-only reference:} \texttt{a photo of a person}
\end{tabular}
\\

FashionMNIST
&
Same as the class-specific FashionMNIST training prompts in
Table~\ref{tab:training_prompts}.
\\

EuroSAT
&
Same as the class-specific EuroSAT training prompts in
Table~\ref{tab:training_prompts}.
\\

BloodMNIST
&
Same as the class-specific BloodMNIST training prompts in
Table~\ref{tab:training_prompts}.
\\

\hline
\end{tabular}
\end{table}

\paragraph{Generation protocol.}
Table~\ref{tab:evaluation_prompts} presents the evaluation prompts used for each dataset. For DreamBooth and CustomConcept101, each released adapter is evaluated using ten fixed DreamBooth prompt templates instantiated with the class name only and without an identifier token. We generate four images per prompt using 50 denoising steps, classifier-free guidance of 7.5, and a resolution of $256\times256$. For CelebA-HQ, we use the single fixed prompt ``a photo of sks person'' and generate 16 images per identity using 50 denoising steps, classifier-free guidance of 7.5, and a resolution of $512\times512$. For FashionMNIST, EuroSAT, and BloodMNIST, we generate class-balanced synthetic training sets using the same class-specific prompts employed during training. Generation uses 25 denoising steps, classifier-free guidance of 7.5, and a resolution of $256\times256$. Table~\ref{tab:synthesis_evaluation} summarizes the synthetic training-set sizes and real evaluation splits.

\begin{table}[ht]
\centering
\caption{Generation and downstream evaluation settings for general data synthesis.}
\label{tab:synthesis_evaluation}
\begin{tabular}{lccc}
\hline
Setting & FashionMNIST & EuroSAT & BloodMNIST \\
\hline
Synthetic images per class & 500 & 2,000 & 1,500 \\
Total synthetic images & 5,000 & 20,000 & 12,000 \\
Denoising steps & 25 & 25 & 25 \\
Guidance scale & 7.5 & 7.5 & 7.5 \\
Generation resolution & $256\times 256$ & $256\times 256$ & $256\times 256$ \\
Real test examples & 10,000 & 2,700 & 3,421 \\
Evaluation split & Official test & Held-out test & Official test \\
\hline
\end{tabular}
\end{table}

\paragraph{Object-personalization metrics.}
We evaluate subject fidelity using DINO-I and CLIP-I. DINO-I uses \texttt{dinov2\_vits14}, while CLIP-I uses the image encoder of CLIP ViT-L/14. For each generated image, we compute its cosine similarity to every reference image. Similarities are first averaged within each prompt and then across prompts. CLIP-T is the cosine similarity between the CLIP ViT-L/14 embeddings of a generated image and its generation prompt. Perceptual image quality is measured using BRISQUE, for which lower values indicate better quality. 

\paragraph{Face-personalization metrics.}
For facial identity evaluation, we specifically adopt identity score matching (ISM) to measure identity preservation. ISM is computed using InsightFace \texttt{buffalo\_l}. Faces are detected using SCRFD at $640\times640$, and identity embeddings are extracted using its ArcFace ResNet-50 model trained on WebFace600K. We average and normalize the ArcFace embeddings of the 12 reference images to obtain one reference identity vector. For each generated image, we retain the largest detected face and compute its cosine similarity to the reference vector. ISM is averaged only over images in which a face is detected. We therefore jointly report the face-detection failure rate (FDFR), defined as the fraction of generated images containing no detected face.

\paragraph{TSTR evaluation.}
For general data synthesis, we follow the train-on-synthetic-test-on-real protocol. A ResNet-50 initialized with ImageNet-1K V2 weights is trained exclusively on the generated images after replacing its final classification layer. The classifier is optimized for 30 epochs using SGD with a learning rate of 0.01, momentum 0.9, weight decay $10^{-4}$, a cosine learning-rate schedule, and a batch size of 128. Images are resized to $224\times224$, randomly flipped horizontally during training, and normalized using ImageNet statistics. EuroSAT is split into 21,600 training, 2,700 validation, and 2,700 test images. We select the number of optimization steps for EuroSAT and BloodMNIST using only their validation sets. All results reported in the main paper are evaluated on the corresponding held-out test sets.

%% file: sections/appendixc.tex
\section{Additional Qualitative Results}
\begin{figure}
    \centering
    \includegraphics[width=0.8\linewidth]{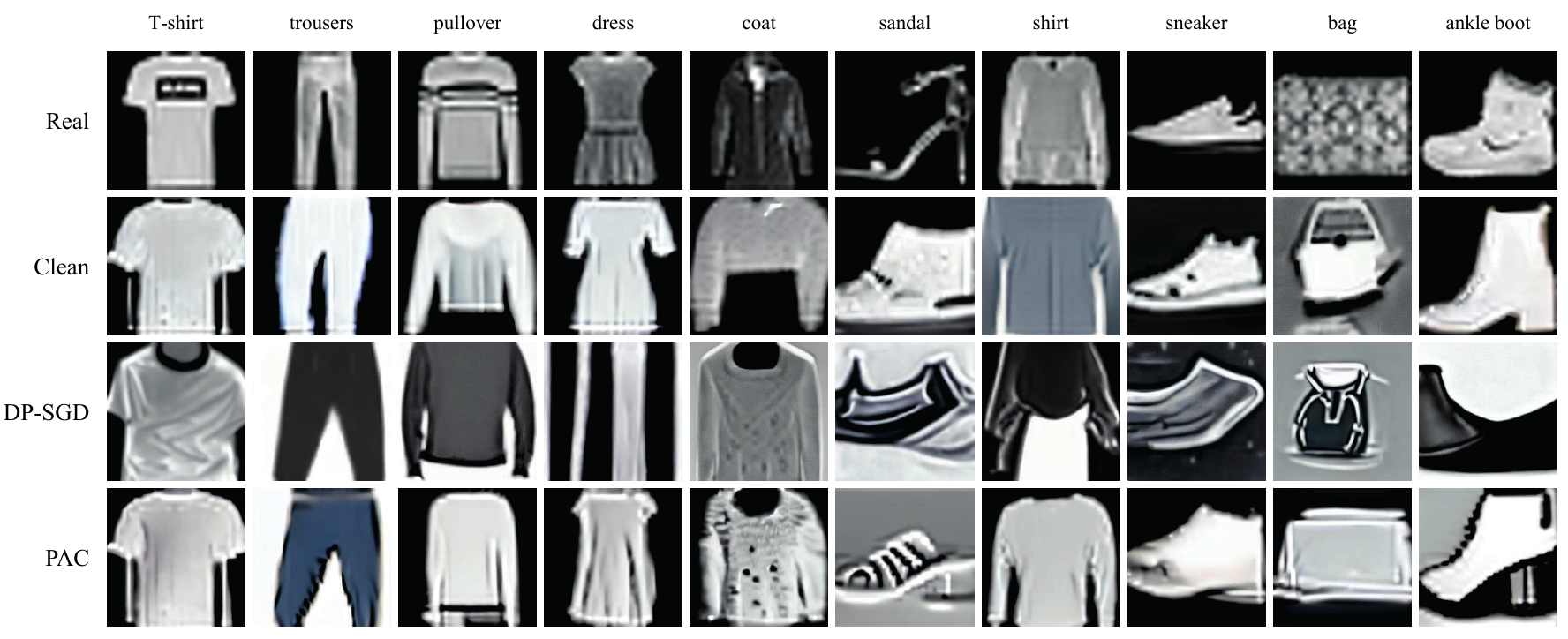}
    \caption{Qualitative comparison of synthetic images generated using PAC privatization and DP-SGD on FashionMNIST at $\varepsilon=10$ and $\delta=10^{-5}$.}
    \label{fig:example_fmnist}
\end{figure}

\begin{figure}
    \centering
    \includegraphics[width=0.8\linewidth]{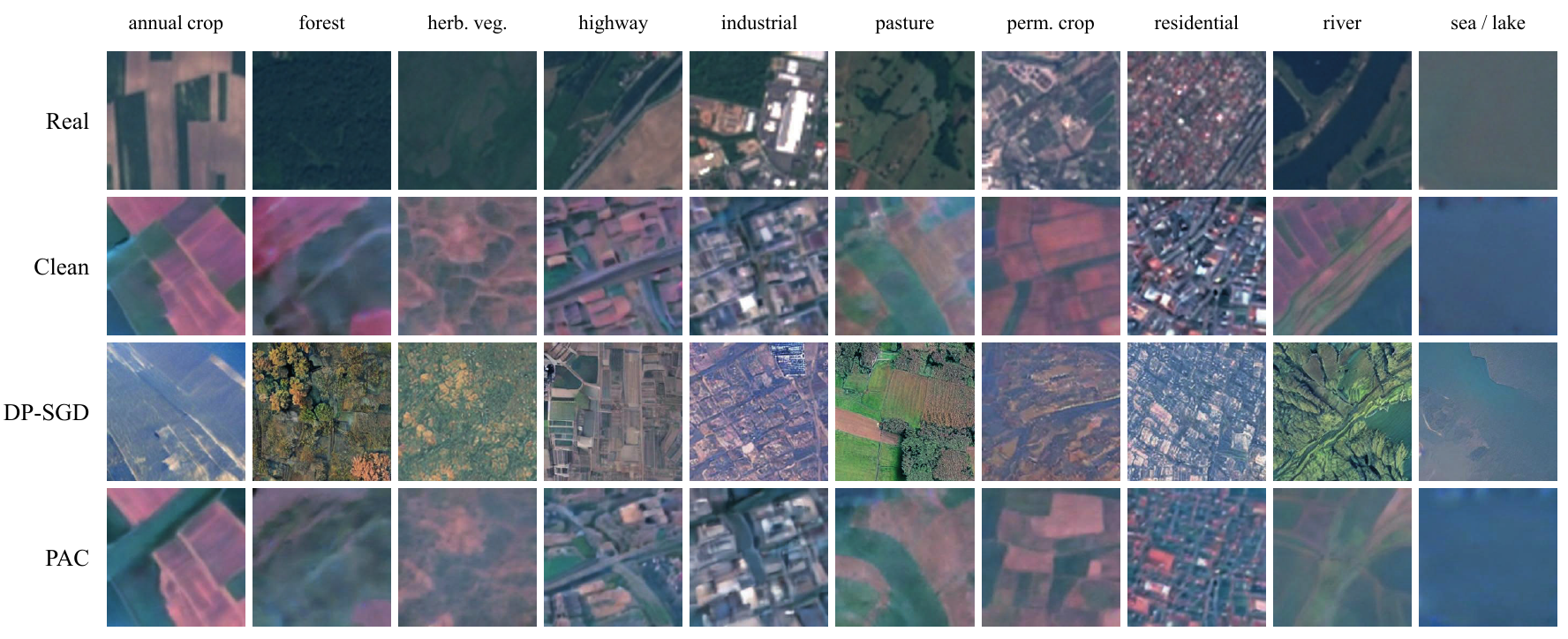}
    \caption{Qualitative comparison of synthetic images generated using PAC privatization and DP-SGD on EuroSAT at $\varepsilon=10$ and $\delta=10^{-5}$.}
    \label{fig:example_eurosat}
\end{figure}

\begin{figure}
    \centering
    \includegraphics[width=0.8\linewidth]{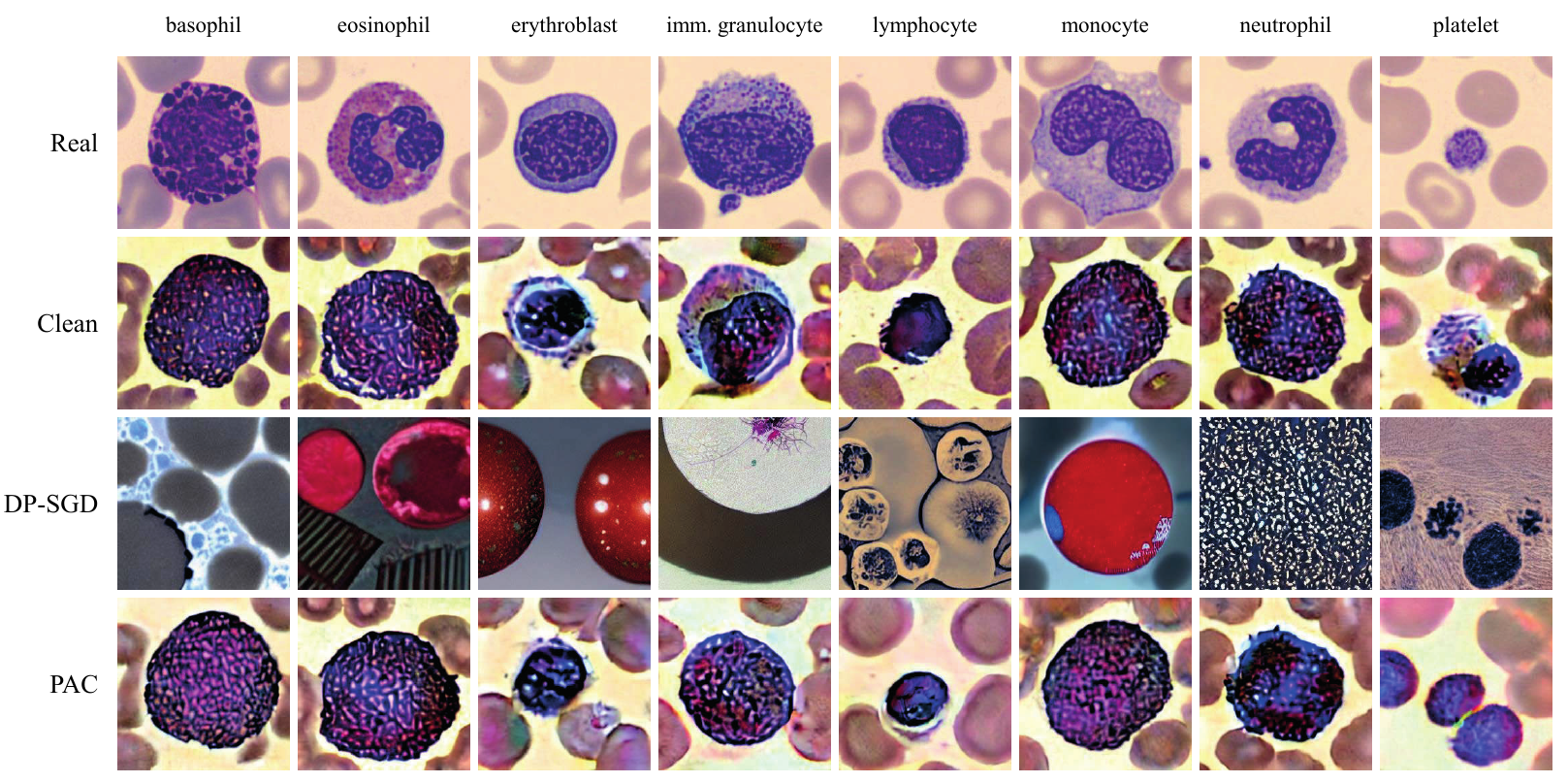}
    \caption{Qualitative comparison of synthetic images generated using PAC privatization and DP-SGD on BloodMNIST at $\varepsilon=10$ and $\delta=10^{-5}$.}
    \label{fig:example_bloodmnist}
\end{figure}

Figures~\ref{fig:example_fmnist}, \ref{fig:example_eurosat}, and \ref{fig:example_bloodmnist} present qualitative examples generated using PAC-private adaptation and DP-SGD on FashionMNIST, EuroSAT, and BloodMNIST, respectively, under the matched privacy budget $\varepsilon=10$ and $\delta=10^{-5}$. Compared with DP-SGD, PAC-private generation more clearly preserves the class-specific visual characteristics of the training data.

%% file: sections/appendixd.tex
\section{Additional Quantitative Results}

\begin{table}[ht]
\centering
\caption{TSTR test accuracy (\%) across privacy budgets. Each entry reports the mean $\pm$ the half-width of the 95\% confidence interval.}
\label{tab:general_synthesis_appendix}
\small
\setlength{\tabcolsep}{5pt}
\begin{tabular}{@{}llccccc@{}}
\hline
Dataset & Method
& $\varepsilon=3$
& $\varepsilon=4$
& $\varepsilon=5$
& $\varepsilon=7$
& $\varepsilon=10$ \\
\hline

\multirow{2}{*}{FMNIST}
& DP
& $35.7\pm4.9$
& $35.7\pm4.6$
& $36.1\pm5.9$
& $36.6\pm6.6$
& $37.2\pm6.5$ \\
& PAC
& $\bm{36.4\pm4.8}$
& $\bm{37.1\pm4.5}$
& $\bm{37.7\pm4.5}$
& $\bm{39.1\pm5.8}$
& $\bm{43.1\pm4.6}$ \\
\hline

\multirow{2}{*}{EuroSAT}
& DP
& $33.1\pm2.9$
& $34.3\pm2.9$
& $34.3\pm2.8$
& $34.4\pm2.6$
& $35.5\pm2.0$ \\
& PAC
& $\bm{33.4\pm3.0}$
& $\bm{34.8\pm3.4}$
& $\bm{36.6\pm3.9}$
& $\bm{39.4\pm3.2}$
& $\bm{39.6\pm3.8}$ \\
\hline

\multirow{2}{*}{BloodMNIST}
& DP
& $12.3\pm3.2$
& $13.2\pm3.4$
& $13.3\pm3.5$
& $13.8\pm3.6$
& $14.1\pm3.8$ \\
& PAC
& $\bm{12.7\pm1.6}$
& $\bm{15.9\pm2.6}$
& $\bm{18.5\pm2.5}$
& $\bm{19.3\pm3.5}$
& $\bm{26.4\pm5.8}$ \\
\hline
\end{tabular}
\end{table}

We additionally evaluate a finer range of privacy budgets with $\varepsilon\leq10$ to further compare DP-SGD and PAC privatization. Table~\ref{tab:general_synthesis_appendix} reports TSTR accuracy at $\varepsilon\in\{3,4,5,7,10\}$. PAC achieves higher mean accuracy than DP-SGD on all three datasets at every evaluated budget, with the performance gap generally increasing as $\varepsilon$ increases. The improvement is particularly pronounced on BloodMNIST, where DP-SGD remains nearly unchanged while PAC progressively recovers downstream utility.

\begin{table}[ht]
\centering
\caption{Private Textual Inversion results on 50 CelebA-HQ identities. Each entry reports the mean $\pm$ the half-width of the 95\% confidence interval. All private mechanisms use $\delta=10^{-5}$.}
\label{tab:celeba_ti_interval}
\small
\begin{tabular}{c cc cc cc}
\hline
\multirow{2}{*}{$\varepsilon$}
& \multicolumn{2}{c}{ISM $\uparrow$}
& \multicolumn{2}{c}{DINO-I $\uparrow$}
& \multicolumn{2}{c}{CLIP-I $\uparrow$} \\
& PAC & DP
& PAC & DP
& PAC & DP \\
\hline
3
& \textbf{.037$\pm$.009} & .002$\pm$.002
& \textbf{.301$\pm$.016} & .197$\pm$.006
& \textbf{.522$\pm$.012} & .471$\pm$.010 \\
5
& \textbf{.057$\pm$.010} & .002$\pm$.002
& \textbf{.347$\pm$.018} & .198$\pm$.006
& \textbf{.554$\pm$.013} & .471$\pm$.010 \\
10
& \textbf{.103$\pm$.013} & .002$\pm$.002
& \textbf{.409$\pm$.021} & .200$\pm$.006
& \textbf{.603$\pm$.014} & .471$\pm$.010 \\
20
& \textbf{.165$\pm$.017} & .003$\pm$.002
& \textbf{.458$\pm$.019} & .205$\pm$.007
& \textbf{.650$\pm$.014} & .472$\pm$.010 \\
50
& \textbf{.195$\pm$.021} & .004$\pm$.002
& \textbf{.478$\pm$.018} & .214$\pm$.007
& \textbf{.670$\pm$.014} & .474$\pm$.010 \\
\hline
\end{tabular}
\end{table}

When averaging the results over the 50 CelebA-HQ identities, we observe substantial variation in performance across identities. To reflect this variability rather than reporting only the mean, we present 95\% confidence intervals based on the variation across the 50 identities in Table~\ref{tab:celeba_ti_interval}. The results remain consistent with those in the main paper: PAC achieves higher ISM, DINO-I, and CLIP-I scores across all privacy budgets, whereas DP-SGD performance changes only marginally as $\varepsilon$ increases.

%% file: main.bbl
\begin{thebibliography}{54}
\providecommand{\natexlab}[1]{#1}
\providecommand{\url}[1]{\texttt{#1}}
\expandafter\ifx\csname urlstyle\endcsname\relax
  \providecommand{\doi}[1]{doi: #1}\else
  \providecommand{\doi}{doi: \begingroup \urlstyle{rm}\Url}\fi

\bibitem[Abadi et~al.(2016)Abadi, Chu, Goodfellow, McMahan, Mironov, Talwar, and Zhang]{abadi2016deep}
Martin Abadi, Andy Chu, Ian Goodfellow, H~Brendan McMahan, Ilya Mironov, Kunal Talwar, and Li~Zhang.
\newblock Deep learning with differential privacy.
\newblock In \emph{Proceedings of the 2016 ACM SIGSAC conference on computer and communications security}, pages 308--318, 2016.

\bibitem[Battiston et~al.(2026)Battiston, Yuan, Zhu, and Boncz]{battiston2026simd}
Ilaria Battiston, Dandan Yuan, Xiaochen Zhu, and Peter Boncz.
\newblock Simd-pac-db: Pretty performant pac privacy.
\newblock \emph{arXiv preprint arXiv:2603.15023}, 2026.

\bibitem[Carlini et~al.(2023)Carlini, Hayes, Nasr, Jagielski, Sehwag, Tramer, Balle, Ippolito, and Wallace]{carlini2023extracting}
Nicolas Carlini, Jamie Hayes, Milad Nasr, Matthew Jagielski, Vikash Sehwag, Florian Tramer, Borja Balle, Daphne Ippolito, and Eric Wallace.
\newblock Extracting training data from diffusion models.
\newblock In \emph{32nd USENIX security symposium (USENIX Security 23)}, pages 5253--5270, 2023.

\bibitem[Caron et~al.(2021)Caron, Touvron, Misra, J{\'e}gou, Mairal, Bojanowski, and Joulin]{caron2021emerging}
Mathilde Caron, Hugo Touvron, Ishan Misra, Herv{\'e} J{\'e}gou, Julien Mairal, Piotr Bojanowski, and Armand Joulin.
\newblock Emerging properties in self-supervised vision transformers.
\newblock In \emph{2021 IEEE/CVF international conference on computer vision (ICCV)}, pages 9630--9640. IEEE, 2021.

\bibitem[Deng et~al.(2019)Deng, Guo, Xue, and Zafeiriou]{deng2019arcface}
Jiankang Deng, Jia Guo, Niannan Xue, and Stefanos Zafeiriou.
\newblock Arcface: Additive angular margin loss for deep face recognition.
\newblock In \emph{Proceedings of the IEEE/CVF conference on computer vision and pattern recognition}, pages 4690--4699, 2019.

\bibitem[Dockhorn et~al.(2022)Dockhorn, Cao, Vahdat, and Kreis]{dockhorn2022differentially}
Tim Dockhorn, Tianshi Cao, Arash Vahdat, and Karsten Kreis.
\newblock Differentially private diffusion models.
\newblock \emph{arXiv preprint arXiv:2210.09929}, 2022.

\bibitem[Dong et~al.(2022)Dong, Roth, and Su]{dong2022gaussian}
Jinshuo Dong, Aaron Roth, and Weijie~J Su.
\newblock Gaussian differential privacy.
\newblock \emph{Journal of the Royal Statistical Society Series B: Statistical Methodology}, 84\penalty0 (1):\penalty0 3--37, 2022.

\bibitem[Dwork et~al.(2006)Dwork, Kenthapadi, McSherry, Mironov, and Naor]{dwork2006our}
Cynthia Dwork, Krishnaram Kenthapadi, Frank McSherry, Ilya Mironov, and Moni Naor.
\newblock Our data, ourselves: Privacy via distributed noise generation.
\newblock In \emph{Annual international conference on the theory and applications of cryptographic techniques}, pages 486--503. Springer, 2006.

\bibitem[Ertan et~al.(2026)Ertan, Zhu, Nguyen, van Dijk, and Devadas]{ertan2026paczero}
Murat~Bilgehan Ertan, Xiaochen Zhu, Phuong~Ha Nguyen, Marten van Dijk, and Srinivas Devadas.
\newblock Paczero: Pac-private fine-tuning of language models via sign quantization.
\newblock \emph{arXiv preprint arXiv:2605.06505}, 2026.

\bibitem[Gal et~al.(2022)Gal, Alaluf, Atzmon, Patashnik, Bermano, Chechik, and Cohen-Or]{gal2022image}
Rinon Gal, Yuval Alaluf, Yuval Atzmon, Or~Patashnik, Amit~H Bermano, Gal Chechik, and Daniel Cohen-Or.
\newblock An image is worth one word: Personalizing text-to-image generation using textual inversion.
\newblock \emph{arXiv preprint arXiv:2208.01618}, 2022.

\bibitem[Ghalebikesabi et~al.(2023)Ghalebikesabi, Berrada, Gowal, Ktena, Stanforth, Hayes, De, Smith, Wiles, and Balle]{ghalebikesabi2023differentially}
Sahra Ghalebikesabi, Leonard Berrada, Sven Gowal, Ira Ktena, Robert Stanforth, Jamie Hayes, Soham De, Samuel~L Smith, Olivia Wiles, and Borja Balle.
\newblock Differentially private diffusion models generate useful synthetic images.
\newblock \emph{arXiv preprint arXiv:2302.13861}, 2023.

\bibitem[Gopi et~al.(2021)Gopi, Lee, and Wutschitz]{gopi2021numerical}
Sivakanth Gopi, Yin~Tat Lee, and Lukas Wutschitz.
\newblock Numerical composition of differential privacy.
\newblock \emph{Advances in Neural Information Processing Systems}, 34:\penalty0 11631--11642, 2021.

\bibitem[Helber et~al.(2019)Helber, Bischke, Dengel, and Borth]{helber2019eurosat}
Patrick Helber, Benjamin Bischke, Andreas Dengel, and Damian Borth.
\newblock Eurosat: A novel dataset and deep learning benchmark for land use and land cover classification.
\newblock \emph{IEEE Journal of Selected Topics in Applied Earth Observations and Remote Sensing}, 12\penalty0 (7):\penalty0 2217--2226, 2019.

\bibitem[Ho and Salimans(2022)]{ho2022classifier}
Jonathan Ho and Tim Salimans.
\newblock Classifier-free diffusion guidance.
\newblock \emph{arXiv preprint arXiv:2207.12598}, 2022.

\bibitem[Ho et~al.(2020)Ho, Jain, and Abbeel]{ho2020denoising}
Jonathan Ho, Ajay Jain, and Pieter Abbeel.
\newblock Denoising diffusion probabilistic models.
\newblock \emph{Advances in neural information processing systems}, 33:\penalty0 6840--6851, 2020.

\bibitem[Hu et~al.(2021)Hu, Shen, Wallis, Allen-Zhu, Li, Wang, Wang, and Chen]{hu2021lora}
Edward~J Hu, Yelong Shen, Phillip Wallis, Zeyuan Allen-Zhu, Yuanzhi Li, Shean Wang, Lu~Wang, and Weizhu Chen.
\newblock Lora: Low-rank adaptation of large language models.
\newblock \emph{arXiv preprint arXiv:2106.09685}, 2021.

\bibitem[Hu et~al.(2024)Hu, Wu, Li, Long, Garrido, Ge, Ding, Forsyth, Li, and Song]{hu2024sok}
Yuzheng Hu, Fan Wu, Qinbin Li, Yunhui Long, Gonzalo~Munilla Garrido, Chang Ge, Bolin Ding, David Forsyth, Bo~Li, and Dawn Song.
\newblock Sok: Privacy-preserving data synthesis.
\newblock In \emph{2024 IEEE Symposium on Security and Privacy (SP)}, pages 4696--4713. IEEE, 2024.

\bibitem[Jiang et~al.(2025)Jiang, Li, Ma, and Wang]{jiang2025rapid}
Tanqiu Jiang, Changjiang Li, Fenglong Ma, and Ting Wang.
\newblock Rapid: Retrieval augmented training of differentially private diffusion models.
\newblock \emph{arXiv preprint arXiv:2502.12794}, 2025.

\bibitem[Karras et~al.(2017)Karras, Aila, Laine, and Lehtinen]{karras2017progressive}
Tero Karras, Timo Aila, Samuli Laine, and Jaakko Lehtinen.
\newblock Progressive growing of gans for improved quality, stability, and variation.
\newblock \emph{arXiv preprint arXiv:1710.10196}, 2017.

\bibitem[Kim et~al.(2024)Kim, Song, Castells, and Choi]{kim2024bk}
Bo-Kyeong Kim, Hyoung-Kyu Song, Thibault Castells, and Shinkook Choi.
\newblock Bk-sdm: A lightweight, fast, and cheap version of stable diffusion.
\newblock In \emph{European Conference on Computer Vision}, pages 381--399. Springer, 2024.

\bibitem[Kulynych et~al.(2025)Kulynych, Gomez, Kaissis, Hayes, Balle, Calmon, and Raisaro]{kulynych2025unifying}
Bogdan Kulynych, Juan Gomez, Georgios Kaissis, Jamie Hayes, Borja Balle, Flavio Calmon, and Jean Raisaro.
\newblock Unifying re-identification, attribute inference, and data reconstruction risks in differential privacy.
\newblock \emph{Advances in Neural Information Processing Systems}, 38:\penalty0 111967--112003, 2025.

\bibitem[Kumari et~al.(2023)Kumari, Zhang, Zhang, Shechtman, and Zhu]{kumari2023multi}
Nupur Kumari, Bingliang Zhang, Richard Zhang, Eli Shechtman, and Jun-Yan Zhu.
\newblock Multi-concept customization of text-to-image diffusion.
\newblock In \emph{2023 IEEE/CVF Conference on Computer Vision and Pattern Recognition (CVPR)}, pages 1931--1941. IEEE, 2023.

\bibitem[Li et~al.(2024)Li, Gong, Li, Zhao, Hou, and Wang]{li2024privimage}
Kecen Li, Chen Gong, Zhixiang Li, Yuzhong Zhao, Xinwen Hou, and Tianhao Wang.
\newblock {PrivImage}: Differentially private synthetic image generation using diffusion models with {Semantic-Aware} pretraining.
\newblock In \emph{33rd USENIX Security Symposium (USENIX Security 24)}, pages 4837--4854, 2024.

\bibitem[Li et~al.(2025)Li, Gong, Li, Zhao, Hou, and Wang]{li2025easy}
Kecen Li, Chen Gong, Xiaochen Li, Yuzhong Zhao, Xinwen Hou, and Tianhao Wang.
\newblock From easy to hard: Building a shortcut for differentially private image synthesis.
\newblock In \emph{2025 IEEE Symposium on Security and Privacy (SP)}, pages 3988--4006. IEEE, 2025.

\bibitem[Liu et~al.(2024{\natexlab{a}})Liu, Lowy, Koike-Akino, Parsons, and Wang]{liu2024efficient}
Jing Liu, Andrew Lowy, Toshiaki Koike-Akino, Kieran Parsons, and Ye~Wang.
\newblock Efficient differentially private fine-tuning of diffusion models.
\newblock \emph{arXiv preprint arXiv:2406.05257}, 2024{\natexlab{a}}.

\bibitem[Liu et~al.(2024{\natexlab{b}})Liu, Lyu, Vinaroz, and Park]{liu2024differentially}
Michael~F Liu, Saiyue Lyu, Margarita Vinaroz, and Mijung Park.
\newblock Differentially private latent diffusion models.
\newblock \emph{Transactions on Machine Learning Research}, 2024{\natexlab{b}}.

\bibitem[Liu et~al.(2015)Liu, Luo, Wang, and Tang]{liu2015deep}
Ziwei Liu, Ping Luo, Xiaogang Wang, and Xiaoou Tang.
\newblock Deep learning face attributes in the wild.
\newblock In \emph{Proceedings of the IEEE international conference on computer vision}, pages 3730--3738, 2015.

\bibitem[Matsumoto et~al.(2023)Matsumoto, Miura, and Yanai]{matsumoto2023membership}
Tomoya Matsumoto, Takayuki Miura, and Naoto Yanai.
\newblock Membership inference attacks against diffusion models.
\newblock In \emph{2023 IEEE Security and Privacy Workshops (SPW)}, pages 77--83. IEEE, 2023.

\bibitem[Mirzadehsarcheshmeh and Khandani(2026)]{mirzadehsarcheshmeh2026pac}
Mina Mirzadehsarcheshmeh and Amir~Keyvan Khandani.
\newblock Pac-private autoregressive generation: Calibrating noise to ensemble disagreement.
\newblock \emph{arXiv preprint arXiv:2609.05676}, 2026.

\bibitem[Mittal et~al.(2012)Mittal, Moorthy, and Bovik]{mittal2012no}
Anish Mittal, Anush~Krishna Moorthy, and Alan~Conrad Bovik.
\newblock No-reference image quality assessment in the spatial domain.
\newblock \emph{IEEE Transactions on image processing}, 21\penalty0 (12):\penalty0 4695--4708, 2012.

\bibitem[Nichol et~al.(2021)Nichol, Dhariwal, Ramesh, Shyam, Mishkin, McGrew, Sutskever, and Chen]{nichol2021glide}
Alex Nichol, Prafulla Dhariwal, Aditya Ramesh, Pranav Shyam, Pamela Mishkin, Bob McGrew, Ilya Sutskever, and Mark Chen.
\newblock Glide: Towards photorealistic image generation and editing with text-guided diffusion models.
\newblock \emph{arXiv preprint arXiv:2112.10741}, 2021.

\bibitem[Oquab et~al.(2023)Oquab, Darcet, Moutakanni, Vo, Szafraniec, Khalidov, Fernandez, Haziza, Massa, El-Nouby, et~al.]{oquab2023dinov2}
Maxime Oquab, Timoth{\'e}e Darcet, Th{\'e}o Moutakanni, Huy Vo, Marc Szafraniec, Vasil Khalidov, Pierre Fernandez, Daniel Haziza, Francisco Massa, Alaaeldin El-Nouby, et~al.
\newblock Dinov2: Learning robust visual features without supervision.
\newblock \emph{arXiv preprint arXiv:2304.07193}, 2023.

\bibitem[Podell et~al.(2024)Podell, English, Lacey, Blattmann, Dockhorn, M{\"u}ller, Penna, and Rombach]{podell2024sdxl}
Dustin Podell, Zion English, Kyle Lacey, Andreas Blattmann, Tim Dockhorn, Jonas M{\"u}ller, Joe Penna, and Robin Rombach.
\newblock Sdxl: Improving latent diffusion models for high-resolution image synthesis.
\newblock In \emph{International Conference on Learning Representations}, pages 1862--1874, 2024.

\bibitem[Radford et~al.(2021)Radford, Kim, Hallacy, Ramesh, Goh, Agarwal, Sastry, Askell, Mishkin, Clark, et~al.]{radford2021learning}
Alec Radford, Jong~Wook Kim, Chris Hallacy, Aditya Ramesh, Gabriel Goh, Sandhini Agarwal, Girish Sastry, Amanda Askell, Pamela Mishkin, Jack Clark, et~al.
\newblock Learning transferable visual models from natural language supervision.
\newblock In \emph{International conference on machine learning}, pages 8748--8763. PmLR, 2021.

\bibitem[Repetto et~al.(2026)Repetto, Sheybani, De~Micheli, and Koushanfar]{repetto2026pac}
Guilhem Repetto, Nojan Sheybani, Gabrielle De~Micheli, and Farinaz Koushanfar.
\newblock Pac to the future: Zero-knowledge proofs of pac private systems.
\newblock In \emph{Companion Proceedings of the ACM Web Conference 2026}, pages 1532--1540, 2026.

\bibitem[Rombach et~al.(2022)Rombach, Blattmann, Lorenz, Esser, and Ommer]{rombach2022high}
Robin Rombach, Andreas Blattmann, Dominik Lorenz, Patrick Esser, and Bj{\"o}rn Ommer.
\newblock High-resolution image synthesis with latent diffusion models.
\newblock In \emph{Proceedings of the IEEE/CVF conference on computer vision and pattern recognition}, pages 10684--10695, 2022.

\bibitem[Ruiz et~al.(2023)Ruiz, Li, Jampani, Pritch, Rubinstein, and Aberman]{ruiz2023dreambooth}
Nataniel Ruiz, Yuanzhen Li, Varun Jampani, Yael Pritch, Michael Rubinstein, and Kfir Aberman.
\newblock Dreambooth: Fine tuning text-to-image diffusion models for subject-driven generation.
\newblock In \emph{2023 IEEE/CVF Conference on Computer Vision and Pattern Recognition (CVPR)}, pages 22500--22510. IEEE, 2023.

\bibitem[Shannon(1948)]{shannon1948mathematical}
Claude~Elwood Shannon.
\newblock A mathematical theory of communication.
\newblock \emph{The Bell system technical journal}, 27\penalty0 (3):\penalty0 379--423, 1948.

\bibitem[Song et~al.(2020)Song, Meng, and Ermon]{song2020denoising}
Jiaming Song, Chenlin Meng, and Stefano Ermon.
\newblock Denoising diffusion implicit models.
\newblock \emph{arXiv preprint arXiv:2010.02502}, 2020.

\bibitem[Sridhar et~al.(2025)Sridhar, Xiao, and Devadas]{sridhar2025pac}
Mayuri Sridhar, Hanshen Xiao, and Srinivas Devadas.
\newblock Pac-private algorithms.
\newblock In \emph{2025 IEEE Symposium on Security and Privacy (SP)}, pages 3839--3857. IEEE, 2025.

\bibitem[Sridhar et~al.(2026)Sridhar, Noguera, Jain, Kristensen, Devadas, Xiao, and Yu]{sridhar2026pac}
Mayuri Sridhar, Michael~A Noguera, Chaitanyasuma Jain, Kevin Kristensen, Srinivas Devadas, Hanshen Xiao, and Xiangyao Yu.
\newblock Pac-private databases.
\newblock \emph{Cryptology ePrint Archive}, 2026.

\bibitem[Stadler et~al.(2022)Stadler, Oprisanu, and Troncoso]{stadler2022synthetic}
Theresa Stadler, Bristena Oprisanu, and Carmela Troncoso.
\newblock Synthetic data--anonymisation groundhog day.
\newblock In \emph{31st USENIX Security Symposium (USENIX Security 22)}, pages 1451--1468, 2022.

\bibitem[Tsai et~al.(2025)Tsai, Li, Yu, Ren, Chen, Chen, and Buet-Golfouse]{tsai2025differentially}
Yu-Lin Tsai, Yizhe Li, Chia-Mu Yu, Xuebin Ren, Po-Yu Chen, Zekai Chen, and Francois Buet-Golfouse.
\newblock Differentially private fine-tuning of diffusion models.
\newblock In \emph{Proceedings of the IEEE/CVF International Conference on Computer Vision}, pages 4561--4571, 2025.

\bibitem[Van~Le et~al.(2023)Van~Le, Phung, Nguyen, Dao, Tran, and Tran]{van2023anti}
Thanh Van~Le, Hao Phung, Thuan~Hoang Nguyen, Quan Dao, Ngoc~N Tran, and Anh Tran.
\newblock Anti-dreambooth: Protecting users from personalized text-to-image synthesis.
\newblock In \emph{2023 IEEE/CVF International Conference on Computer Vision (ICCV)}, pages 2116--2127. IEEE, 2023.

\bibitem[Wang et~al.(2024)Wang, Pang, Lu, Rao, Zhou, and Xue]{wang2024dp}
Haichen Wang, Shuchao Pang, Zhigang Lu, Yihang Rao, Yongbin Zhou, and Minhui Xue.
\newblock dp-promise: Differentially private diffusion probabilistic models for image synthesis.
\newblock In \emph{33rd USENIX Security Symposium (USENIX Security 24)}, pages 1063--1080, 2024.

\bibitem[Webster(2023)]{webster2023reproducible}
Ryan Webster.
\newblock A reproducible extraction of training images from diffusion models.
\newblock \emph{arXiv preprint arXiv:2305.08694}, 2023.

\bibitem[Xiao et~al.(2017)Xiao, Rasul, and Vollgraf]{xiao2017fashion}
Han Xiao, Kashif Rasul, and Roland Vollgraf.
\newblock Fashion-mnist: a novel image dataset for benchmarking machine learning algorithms.
\newblock \emph{arXiv preprint arXiv:1708.07747}, 2017.

\bibitem[Xiao and Devadas(2023)]{xiao2023pac}
Hanshen Xiao and Srinivas Devadas.
\newblock Pac privacy: Automatic privacy measurement and control of data processing.
\newblock In \emph{Annual International Cryptology Conference}, pages 611--644. Springer, 2023.

\bibitem[Xiao et~al.(2024)Xiao, Suh, and Devadas]{xiao2024formal}
Hanshen Xiao, G~Edward Suh, and Srinivas Devadas.
\newblock Formal privacy proof of data encoding: The possibility and impossibility of learnable encryption.
\newblock In \emph{Proceedings of the 2024 on ACM SIGSAC Conference on Computer and Communications Security}, pages 1834--1848, 2024.

\bibitem[Xiao et~al.(2025)Xiao, Wan, Shi, and Devadas]{xiao2025one}
Hanshen Xiao, Jun Wan, Elaine Shi, and Srinivas Devadas.
\newblock One-sided bounded noise: Theory, optimization algorithms and applications.
\newblock In \emph{Proceedings of the 2025 ACM SIGSAC Conference on Computer and Communications Security}, pages 4214--4228, 2025.

\bibitem[Xu(2024)]{xu2024pac}
Qipan Xu.
\newblock Pac privacy preserving diffusion models.
\newblock Master's thesis, Rutgers The State University of New Jersey, School of Graduate Studies, 2024.

\bibitem[Yang et~al.(2023)Yang, Shi, Wei, Liu, Zhao, Ke, Pfister, and Ni]{yang2023medmnist}
Jiancheng Yang, Rui Shi, Donglai Wei, Zequan Liu, Lin Zhao, Bilian Ke, Hanspeter Pfister, and Bingbing Ni.
\newblock Medmnist v2-a large-scale lightweight benchmark for 2d and 3d biomedical image classification.
\newblock \emph{Scientific data}, 10\penalty0 (1):\penalty0 41, 2023.

\bibitem[Zhang and Vorobeychik(2026)]{zhang2026residual}
Tao Zhang and Yevgeniy Vorobeychik.
\newblock {Residual-PAC} privacy: Automatic privacy control beyond the gaussian barrier.
\newblock In \emph{35th USENIX Security Symposium (USENIX Security 26)}, 2026.

\bibitem[Zhu(2026)]{zhu2026pac}
Xiaochen Zhu.
\newblock \emph{PAC-private responses with adversarial composition}.
\newblock PhD thesis, Massachusetts Institute of Technology, 2026.

\end{thebibliography}
